\DeclareMathOperator*{\E}{\mathbb E}
\DeclareMathOperator*{\argmin}{argmin}
\DeclareMathOperator{\1}{\mathbbm 1}
\DeclareMathOperator*{\Reg}{\mathsf{Reg}}
\newcommand{\cA}{\mathcal{A}}
\newcommand{\cC}{\mathcal{C}}
\newcommand{\cG}{\mathcal{G}}
\newcommand{\sA}{{\mathscr A}}
\newcommand{\sC}{{\mathscr C}}
\newcommand{\sG}{{\mathscr G}}
\newcommand{\sK}{{\mathscr K}}
\newcommand{\sS}{{\mathscr S}}
\newcommand{\B}{B}
\newcommand{\h}{\widehat}
\newcommand{\e}{\epsilon}
\newcommand{\set}[2][]{#1 \{ #2 #1 \} }
\newcommand{\ignore}[1]{}
\newcommand{\calC}{\mathcal{C}}
\newcommand{\MDP}{\text{MDP}(\sS, \sA, \cC, \bm{\theta})}
\newcommand{\OPT}{\text{OPT}}
\newcommand{\cmax}{c}
\newtheorem*{rep@theorem}{\rep@title}
\newcommand{\newreptheorem}[2]{%
\newenvironment{rep#1}[1]{%
 \def\rep@title{#2 \ref{##1}}%
 \begin{rep@theorem}}%
 {\end{rep@theorem}}}
\newtheorem{theorem}{Theorem}
\newtheorem{definition}{Definition}
\newtheorem{corollary}{Corollary}
\title{Beyond Individual and Group Fairness
}
\author{Pranjal Awasthi\thanks{
Google Research and Rutgers University.
\tt{pranjalawasthi@google.com}}
\and 
Corinna Cortes\thanks{
Google Research.
\tt{corinna@google.com}}
\and 
Yishay Mansour\thanks{
Tel Aviv University and Google Research.
\tt{mansour.yishay@gmail.com}}
\and
Mehryar Mohri\thanks{
Google Research and Courant Institute.
\tt{mohri@google.com}}
}
\date{}
\begin{document}

\maketitle

\begin{abstract}
  We present a new data-driven model of fairness that, unlike existing
  static definitions of individual or group fairness is guided by the
  unfairness complaints received by the system. Our model supports
  multiple fairness criteria and takes into account their potential
  incompatibilities. We consider both a stochastic and an adversarial
  setting of our model. In the stochastic setting, we show that our
  framework can be naturally cast as a Markov Decision Process with
  stochastic losses, for which we give efficient vanishing regret
  algorithmic solutions. In the adversarial setting, we design efficient
  algorithms with competitive ratio guarantees. We also report the
  results of experiments with our algorithms and the stochastic framework
  on artificial datasets, to demonstrate their effectiveness
  empirically.
\end{abstract}

\section{Introduction}
\label{sec:intro}

Learning algorithms trained on large amounts of data are increasingly
adopted in applications with significant individual and social
consequences such as selecting loan applicants, filtering resumes of
job applicants, estimating the likelihood for a defendant to commit
future crimes, or deciding where to deploy police officers.  Analyzing
the risk of bias in these systems is therefore crucial. In fact, that
is also critical for seemingly less socially consequential
applications such as ads placement, recommendation systems, speech
recognition, and many other common applications of machine learning.
Such biases can appear due to the way the training data has been
collected, due to an improper choice of the loss function optimized,
or as a result of some other algorithmic choices.
This has motivated a flurry of recent research work on the topic of
\emph{fairness} and \emph{algorithmic bias} in machine learning
\citep{dwork2012fairness, zemel2013, hardt2016equality, kleinberg2017,
  pleiss2017fairness, agarwal2018reductions, KearnsNeelRothWu2018, 
gillen2018online, SharifiMalvajerdiKearnsRoth2019}.

How should fairness be defined? This has been one of the key
challenges faced by most recent publications dealing with the
topic. Two broad families of definitions have been adopted in the
literature: \emph{statistical} or \emph{group fairness}, and
\emph{individual fairness}.
Statistical fairness is typically defined via the choice of some
protected sub-groups, often based on sensitive attributes such as
race, gender, ethnicity, or sexual orientation, and that of a metric
such as \emph{false positive rate}, \emph{false negative rate}, or
\emph{classification error}. The requirement is an equalized metric
for all protected sub-groups. This is by far the most popular
definition of fairness and includes a very wide literature. Some
common examples of group fairness criteria include
\emph{counterfactual} or \emph{demographic parity}
\citep{KusnerLoftusRussellSilva17} and \emph{equality of opportunity}
\citep{hardt2016equality}. The benefits of these metrics is that they
can be tested and a classifier can be learned by imposing equalized
metric constraints. On the other hand, they sometimes admit a trivial
solution with clearly undesirable
properties \citep{KearnsNeelRothWu2018}. Furthermore, there is no
general agreement on the choice of the protected groups considered and
different metrics can be incompatible \citep{kleinberg2017,
  feller2016computer}.

Group fairness only provides an average guarantee for the individuals
in a protected group. In contrast, individual fairness requires that
similar individuals be treated similarly by the model. This similarity
is often defined according to an underlying metric over user features
\citep{zemel2013, dwork2012fairness, joseph2016fairness}.  The
problem, however, is that it is not clear what that metric should be
and there is no general agreement on its definition. Furthermore, the
analysis of individual fairness often resorts to strong functional
assumptions.

\begin{figure}[t]
\centering
\includegraphics[scale = .3]{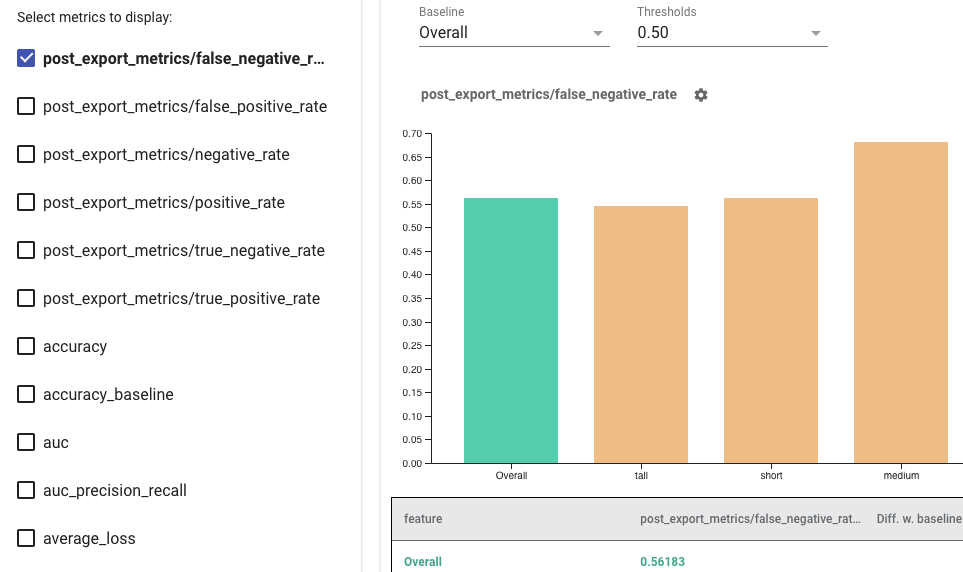}
\caption{A snapshot of the fairness indicator tool recently launched in
  Tensorflow.
\label{fig:fairness-indicators}}
\vskip -.25in
\end{figure}
 
The absence of a unique metric capturing algorithmic fairness is not
just a theoretical obstacle, it can result in troublesome dilemma in
practice.  An illuminating example is the analysis of the COMPAS tool
for predicting recidivism by \cite{angwin2019machine}. The authors
showed that, among black defendants who do not recidivate, the tool
predicted incorrectly at twice the rate than it did for white
defendants who did not recidivate. In other words, the tool was unfair
according to the \emph{false positive rate} metric. The creator of the
tool, Northpointe, responded by demonstrating that the tool was fair
according to other natural measures such as AUC (Area Under the ROC
Curve). Later work showed that this tension is inherent and that it is
often impossible to simultaneously satisfy multiple seemingly natural
fairness criteria \citep{kleinberg2017} (see also discussion by
\cite{feller2016computer}).

Thus, there is no single generally accepted definition of fairness.
Moreover, while algorithms tailored to a specific metric would be
effective at first, experience shows that they become unrealistic over
time: once a system is deployed and it interacts with the environment
and its end-users, hidden biases encoded in the system design emerge,
which in turn raise fairness complaints from new user groups and
metrics originally not accounted for. This suggests working with
multiple fairness criteria. However, as already pointed out, some
criteria cannot be simultaneously satisfied.

To deal with the issues just discussed, we propose a data-driven model
of fairness resolution guided by the unfairness complaints received,
rather than by a single static definition of individual or group
fairness: at each time step, a fairness resolution algorithm chooses
to \emph{fix} a criterion, thereby \emph{unfixing} incompatible
criteria, incurring a fixing cost, as well as some loss due to a
new sequence of fairness complaints received.
The fixing cost depends on the criterion.  For instance, addressing
differences in false positive rates might require augmenting the loss
with a new regularization term, whereas complying with a specific
individual fairness criterion could require collecting more data and
learning an accurate distance metric among individuals.
The objective of the fairness resolution algorithm is to minimize its
cumulative loss over the course of multiple interactions with the
environment.

To illustrate our model, consider the fairness indicator tool recently
launched in TensorFlow (Figure~\ref{fig:fairness-indicators}).  Using
this tool, one can monitor the performance of the current classifier
according to different fairness metrics. As more data is collected and
the system interacts with the environment, the cost incurred by the
system on each metric is updated.  This cost encodes quantitative
measures such as the number of data points violating a metric, as well
as more qualitative ones such as the negative publicity generated as a
result of violating a fairness criterion, or its legal and
ethical ramifications. As these costs are updated, the system designer is faced with a choice of which metrics to prioritize at a particular time. Our goal in this work is to propose a model and algorithmic solutions to make near optimal choices in such scenarios.

In Section~\ref{sec:model}, we define our model in more detail. Our
model supports multiple fairness criteria and takes into account their
potential incompatibilities. We consider both a stochastic and an
adversarial setting. In the stochastic setting
(Section~\ref{sec:stochastic}), we show that our framework can be
naturally cast as a Markov Decision Process with stochastic losses,
for which we give efficient vanishing regret algorithmic solutions. In
the adversarial setting (Section~\ref{sec:adversarial}), we describe
algorithms with competitive ratio guarantees. We also report the
results of experiments  
(Section~\ref{sec:experiments}) with our
algorithms to demonstrate their effectiveness empirically.

\section{Fairness Resolution Model}
\label{sec:model}

We consider the problem of resolving fairness issues in the presence
of multiple fairness criteria. Not all fairness criteria can be
satisfied simultaneously.  The constraints can be specified by an
undirected graph $\sG = (V, E)$, where each vertex represents a fairness
criterion and where an edge between vertices $v_i$ and $v_j$ indicates
that criteria $v_i$ and $v_j$ cannot be simultaneously satisfied. We will
denote by $V = \set{v_1, \ldots, v_k}$ the set of $k$ fairness
criteria considered.  Figure~\ref{fig:g} illustrates these
definitions. Note, that vertices may represent joint criteria as in Figure~\ref{fig:g}(b).

\begin{figure}[t]
\centering
\begin{tabular}{c@{\hspace{2cm}}c}
\includegraphics[scale=.66]{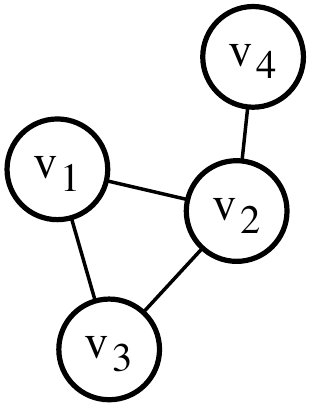} &
\includegraphics[scale=.66]{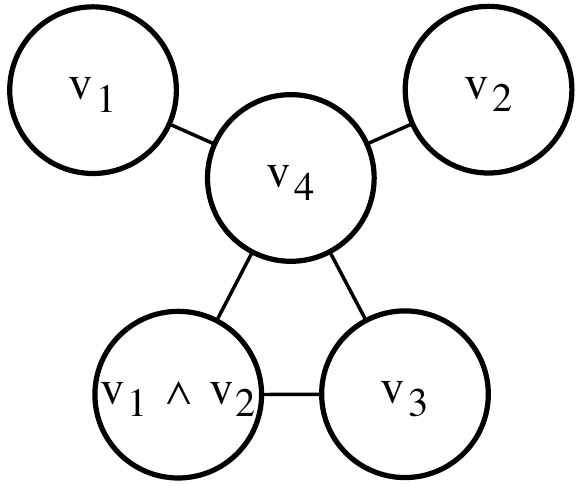}\\
(a) & (b)
\end{tabular}
\caption{(a) Illustration of constraints graph $\sG$.
  $v_1, v_2, v_3, v_4$ represent $4$ different fairness criteria. (b)
  More generally, each vertex can represent a joint fairness
  criterion, for example $v_1 \wedge v_2$. This helps specify joint
  constraints such as the following: $v_1$, $v_2$, and $v_3$ cannot be
  simultaneously satisfied.}
\label{fig:g}
\vskip -.15in
\end{figure}

At any time, a vertex $v_i$ can be either \emph{fixed}, meaning that
criterion $v_i$ is met or is not violated, or \emph{unfixed}, meaning
the opposite.  \emph{Fixing} criterion $v_i$ entails an algorithmic
and resource allocation cost that we denote by $c_i$.  Depending on
the type of criterion and intervention, $c_i$ may include different
costs, such as that of additional data collection, the average number
of human hours needed to address the fairness violation, or the loss
incurred in some metric, such as accuracy.  \ignore{An edge $(v_i, v_j) \in E$
in the graph indicates that vertices $v_i$ and $v_j$ cannot be both
fixed at any time.} Initially, all vertices are in an unfixed state. At each time step, a
fairness resolution system or algorithm selects some action, which may
be to fix some unfixed vertex $v_i$, thereby incurring the cost $c_i$
and \emph{unfixing} any vertex adjacent to $v_i$, or the algorithm may select the null action, not to fix
or unfix any vertex, and wait to collect more data. 

In response to its action, the
system receives a sequence of fairness complaints.  The complaints
affect one or several vertices of $\sG$ and result in a fairness loss
corresponding to the vertices affected.
The objective of the algorithm is to minimize the total cost incurred
over a period of time, which includes the total fairness loss accrued
as well as the total cost of fixing various fairness criteria over
that time period.

As an example, in the context of the COMPAS controversy discussed in
the previous section, fixing the criteria corresponding to the
false positive metric may result in an increase of the complaints related
to say a calibration  metric. The decision to fix a fairness
criterion may also have positive implications for other metrics. For
instance, criteria such as the false positive rate are correlated with
accuracy and hence fixing one can be expected to decrease the
complaints for the other.

\textbf{Realizability.} 
While the focus of our study is mainly theoretical and algorithmic, we
wish to emphasize that our model can indeed be realized in practice.
Graph $\sG$ can be derived from analyzing past fairness complaints and
by measuring how fixing one criterion affects the performance on
others.  The assignment of a complaint to one or more fairness
criteria can be achieved by making use of known unfair classifiers, as discussed in Section~\ref{sec:intro}, or via a multi-class multi-label classifier
trained on past data. Finally, the average fixing cost specific to
each criterion can be estimated from past experience.
Our model also provides the flexibility of accounting for
incompatibilities among more than two criteria such as those discussed
by \cite{kleinberg2017} and \cite{feller2016computer}.  This can be
achieved by augmenting the graph with vertices representing joint
criteria as in Figure~\ref{fig:g}(b). The graph in that example
stipulates in particular that $v_1$, $v_2$ and $v_3$ cannot be all
simultaneously satisfied.

\ignore{
As an example,
consider a subset of $m$ criteria $C = \set{v_{i_1}, \dots, v_{i_m}}$
that cannot be simultaneously satisfied. In order to model this we add
$m$ additional criteria vertices $u_1, \dots, u_m$ to the graph
$\sG$. Vertex $u_t$ captures the subset $C \setminus \{v_{i_t}\}$ as its
children and shares the same neighbors as $v_{i_t}$. Furthermore, we
add edges between $u_{t}$ and $v_{i_t}$ for all $t$ and also edges
between each pair of vertices $u_{t_1}$ and $u_{t_2}$. Finally, we
modify the state transitions to enforce that fixing a vertex $u_t$
fixes all its children and unfixing $u_t$ unfixes a certain
pre-specified subset, or all of the children of $u_t$
}

\textbf{Ethical implications.}  It is worth discussing various
aspects of our model and questioning its social implications, in
particular its potential impact on social values of fairness and
equity.
There is no generally agreed upon definition of these terms, let alone
a computational one.  A key motivation behind the design of our model
is precisely to refrain from proposing yet another definition of
fairness, accepted by some, rejected by others. Indeed, experience
shows that the notion of fairness is difficult to define.  No two
individuals seem to share the same notion of fairness, perhaps because
of their distinct personal interests. Similarly, definitions of group fairness
favoring some protected groups seem not to be agreed upon by other
social groups. Additionally, hidden unfairness effects have been shown
to come up as a result of seemingly natural notions of group
fairness. Moreover, while discrimination based on given sensitive attributes is illegal by law in many countries, the notion of \emph{protected group} is not well
defined.  In fact, in practice, reactions to a deployed software
system reveal new social groups defined by more complex attributes
than standard protected groups defined via standard attributes such as race, gender,
ethnicity, or income level.

Instead of a
\emph{static definition} of fairness, we advocate a \emph{dynamic
  definition} determined by user reactions to the system. This is
further motivated by the fact that complying with multiple fairness
criteria might be impossible.
There may be a better chance, however, for abiding with multiple criteria over time. Our model thus avoids committing to a single dogmatic
definition.  However, it is also subject to some drawbacks. First, a
static definition of fairness may be more convenient from the point of
view of regulators. Second, while we seek not to commit to a single
notion of fairness, we are in fact relying on multiple fairness
criteria, which may include those typically adopted in the past. It is our
hope though that the use of multiple criteria can help limit hidden
biases and that, by virtue of taking into consideration the reactions
to the system, our model is more \emph{democratic} or flexible, and thus a more
suitable candidate for regulations too.

In the next sections, we study the computational and algorithmic
aspects of our model both in the stochastic and the adversarial
setting.  We will present nearly optimal algorithmic solutions for
both settings. Our analysis will further demonstrate the flexibility
of our model.

\section{Stochastic Setting}
\label{sec:stochastic}

In this section, we discuss a stochastic setting of our model that can
be naturally described in terms of a Markov Decision Process (MDP).
Next, we present algorithms with strong regret guarantees for this
setting.

\subsection{Description}

A key observation in this scenario is that, at any time, the
distribution of fairness complaints received by the system is a
function of its current \emph{state}, that is the current set of fixed
or unfixed criteria $v_i$.
Thus, we consider an MDP with a state space
$\sS \subseteq \set{0, 1}^k$ representing the set of bit vectors for
criteria: a state $s \in \set{0, 1}^k$ is defined by $s(i) = 0$ when
criterion $v_i$ is unfixed and $s(i) = 1$ when it is fixed. By
definition of the incompatibility graph $\sG$, $s$ is a valid state iff
the set of fixed criteria at $s$ form an independent set of $\sG$.
When in state $s \in \sS$, the system incurs a
loss $\ell_i^s$ due to complaints related to criterion $i \in
[k]$. $\ell_i^s$ is a random variable assumed to take values in
$[0, \B]$ with mean $\mu_i^{s}$.

The set of actions for our MDP is $\sA = \set{0, 1, \ldots, k}$ where
a non-zero action $i$ corresponds to fixing criterion $i$, while
action $0$ is the null action, that is no criterion is fixed.
Transitions are deterministic: given state $s$ and action $i \in \sA$,
the next state is $s$ if $i = 0$ since the fixed-unfixed bits for
criteria are unchanged; otherwise, for $i \neq 0$ the next state is
the state $s'$ that only differs from $s$ by $s'(i) = 1$ and
(possibly) $s'(j) = 0$ for all $j \in N(i)$, where $N(i)$ denotes the
set of neighbors of $v_i$ in $\sG$, since vertices neighboring $i$ must
be unfixed once $i$ is fixed.

Each action $a = i$ admits a fixing cost $c_i$.  The cost for the null action
is $c_0 = 0$. The loss incurred by the algorithm when taking
action $a$ at state $s$ is the sum of the fixing cost $c_a$ and the
complaint losses at the (possibly) next state $s'$:
$\lambda(s,a) = c_a + \sum_{i = 1}^k \ell_i^{s'}$. The expected loss of transition
$(s, a, s')$ is thus:
\begin{equation}
\label{eq:loss-definition}
\E\left[ c_a + \sum_{i = 1}^k \ell_i^{s'} \right] = c_a + \sum_{i = 1}^k \mu_i^{s'}.
\end{equation}
Note, $c_a$ and the losses $\ell_i^{s'}$ are observed by the algorithm, but
the mean values $\mu_i^{s'}$ are unknown. To keep the formalism simple we assume that the cost $c_a$ of taking an action $a$ is independent of the current state $s$. However, our theoretical results easily extend to the setting where taking an action in different states has different costs.
Figure~\ref{fig:mdp-transitions} illustrates our stochastic MDP model
in the special case of a fully connected graph $\sG$, that is one where
all three criteria are mutually incompatible.

\begin{figure}[t]
  \centering
  \includegraphics[width=0.66\textwidth]{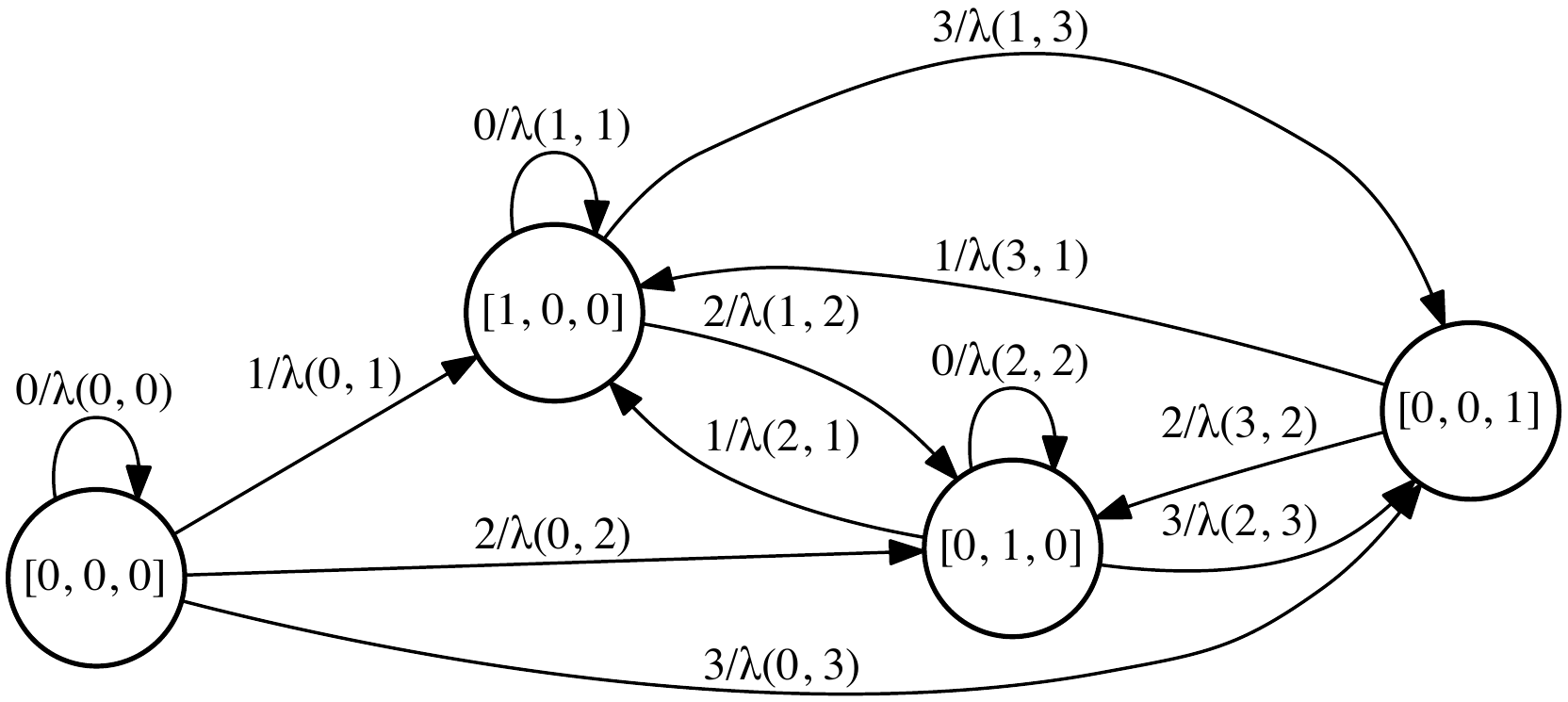}
  \caption{Illustration of the MDP for a fully connected
    incompatibility graph $\sG$ over three criteria. The state set is
    $\sS = \set{\bm{0} = [0,0,0], \bm{1} = [1,0,0], \bm{2} = [0,1,0],
      \bm{3} = [0,0,1]}$ and the action set
    $\sA = \set{0, 1, 2, 3}$. Each transition is labeled with
    $a/\lambda(s,a)$, where $a$ is the action taken from state $s$ and
    where $\lambda(s,a)$ is the total loss incurred as a result of the action.}
    \label{fig:mdp-transitions}
\end{figure}

\ignore{
at state $s$ will
only depend on local correlations involving small subsets of other
vertices.

The above formulation has exponentially
(in the size of the graph) many parameters corresponding to each pair
of states and vertices. However, in practical settings

}

\paragraph{Correlation sets.} In practice, the distribution of
complaints related to a criterion $v_i$ at two different states may be
related. To capture these correlations in a general way, we assume
that a collection $\cC$ of \emph{correlation sets}
$\cC = \set{\sC_1, \sC_2, \dots, \sC_n}$ is given, where each $\sC_j$
has size at most $m$. Notice that the number of sets in $\cC$ is at
most $\binom{k}{\leq m}$.  Each set $\sC_j$ is a subset of the set of
criteria $\set{1, \ldots, k}$ and results from local dependencies among
the criteria it includes. We assume that at a given state $s$, each
set $\sC_j$ generates losses with mean value $\theta^s_j$ per vertex, and that if
two states $s$ and $s'$ admit the same configuration for the vertices
in $\sC_j$, then they share the same parameter
$\theta^s_j = \theta^{s'}_j$. Given a criterion $i$ and a state $s$, we
assume that the loss incurred by criterion $i$ equals the sum of the
individual losses due to each correlation set $\sC_j$ that contains
$i$. Thus, $\mu^s_i$ can be expressed as follows:
\begin{equation}
\label{eq:loss-definition-via-correlation}
\mu^s_i = \sum_{j = 1}^n \theta^s_j \1(i \in \sC_j).
\end{equation}
Since for each $j \in [n]$, there are at most $2^m$ different
configurations for the vertices of $\sC_j$ in a state $s$, there are
at most $2^m n$ distinct parameters $\theta^s_j$.
Let $\bm{\theta}$ denote the vector of all distinct parameters
$\theta^s_j$. Our MDP model can then be denoted
$\text{MDP}(\sS, \sA, \cC, \bm{\theta})$.

\subsection{Algorithm}

We consider an online algorithm that at each time $t$ takes action
$a_t$ from state $s_t$ of the MDP previously described and reaches
state $s_{t + 1}$, starting from the initial state $(0, \ldots,
0)$. For standard MDP settings, the objective of an algorithm can be
formulated as that of learning a policy, that is a mapping
$\pi\colon \sS \to \sA$, with a value as close as possible to that of
the optimal. Here, we are mainly interested in the cumulative
loss of the algorithm over the course of $T$ interactions with the
environment. Thus, the objective of an algorithm $\cA$ can be
formulated as that of minimizing its pseudo-regret, defined by
\begin{equation}
\label{eq:pseudo-regret}
\Reg(\cA) 
= \sum_{t=1}^T \E \Big[ \lambda_t(s_t, a_t) \Big] - \sum_{t=1}^T \E \Big[ \lambda_t (s^{\pi^*}_t, \pi^*(s^{\pi^*}_t)) \Big],
\end{equation}
where $\lambda_t(s, a)$ is the total loss incurred by taking action
$a$ at state $s$ at time $t$ and where $s_1 = (0, \ldots, 0)$ and
$\pi^*$ is the optimal policy.  The expectation is over the random
generation of the complaint losses.
Given the correlation sets and the parameter $\bm{\theta}$, the
optimal policy $\pi^*$ corresponds to moving from the initial state
$(0, \ldots, 0)$ to the state $s^* \in \sS$ with the most favorable
distribution and remaining at $s^*$ forever. We define by $g(s)$ the expected (per time step) loss incurred by staying in state $s$, i.e.,
\begin{align}
    \label{eq:def-gs}
    g(s) \coloneqq \sum_{i = 1}^k 
\mu^s_i.
\end{align}
Then given the parameters of the MDP the optimal state $s^*$ is defined as
follows:
\begin{equation}
\label{eq:opt-state}
s^* = \argmin_{s \in \sS} g(s) .
\end{equation}

Note, in this definition of $s^*$, we are disregarding the one-time
cost of moving to a state from the initial state, since in the long run the expected cost incurred by staying at a given state governs the choice of the optimal state.

\ignore{
The objective of the algorithm is to learn for the fairness
resolution MDP previously described a policy that admits small regret
after $T$ time steps, where a policy $\pi\colon \sS \to \sA$ is a mapping
from states to actions.

Given the correlation sets and the parameter $\bm{\theta}$, the
optimal policy $\pi^*$ corresponds to moving from the initial state
$(0, \ldots, 0)$ to the state $s^* \in \sS$ with the most favorable
distribution and remaining at $s^*$ forever. State $s^*$ is defined as
follows:
\begin{equation}
\label{eq:opt-state}
s^* = \argmin_{s \in \sS} g(s) \coloneqq \sum_{i = 1}^k 
\mu^s_i.
\end{equation}
Here, we are disregarding the one-time cost of moving to a state from
the initial state.

Let $(s_t, a_t)$ be the state action pairs that result
from running an online algorithm $\cA$ for the above MDP and let
$(s^{\pi^*}_t, a^{\pi^*}_t)$ be the state action pairs that result
from playing the optimal policy $\pi^*$. Then we define the
pseudo-regret of the algorithm as:
\begin{equation}
\label{eq:pseudo-regret}
\Reg(\cA) 
= \sum_{t=1}^T \sum_{i=1}^k \E \Big[ \ell^{(i)}_t \big| (s_t, a_t) \Big] - \sum_{t=1}^T \sum_{i=1}^k \E \Big[ \ell^{(i)}_t \big| (s^{\pi^*}_t, a^{\pi^*}_t) \Big].
\end{equation}

We will design efficient online algorithms for the above setting with
low regret. 
}

Since our problem can be formulated as that of learning with a
deterministic MDP with stochastic losses, we could seek to adopt an
existing algorithm for that problem. However, the running-time
complexity of such algorithms would directly depend on the size of the
state space $\sS$, which here is exponential in $k$, and that of the
action set $\sA$. Furthermore, the regret guarantees of these
algorithms would also depend on $|\sS| |\sA|$. Instead, we will show
that, by exploiting the structure of the MDP, we can design vanishing
regret algorithms with a computational complexity that is only
polynomial in $k$ and the number of parameters. More specifically, we
assume access to an oracle that can return the best state, given
the estimated parameters $\bm{\theta}$, that is one that returns the
solution of the optimization \eqref{eq:opt-state}. This optimization
problem is NP-hard even when correlation sets admit a simple
structure.

As an example, consider the case where the correlation sets are
reduced to singletons ($m = 1$), $\sC_i = \set{v_i}$, the loss
distributions corresponding to each criterion are mutually
independent. In that case, given the parameters, finding the optimal
state corresponds to solving a weighted vertex cover problem for which
an approximately optimal solution can be found in polynomial
time. Furthermore, the true parameters of the model can be estimated
accurately by observing at most $k+1$ specific states in $\sS$. See Theorem \ref{thm:opt-state-for-m-equal-1} in Appendix \ref{sec:app-stochastic}.

\paragraph{Case $m=2$.}
To illustrate the ideas behind our general algorithm, we first consider
a simpler setting where correlation sets are defined on subsets of size
at most two. This setting also captures an important case where fixing a particular criterion affects the rate of fairness complaints of its neighbors. The algorithmic challenge we face here is to avoid exploring the exponentially many states in the MDP. Instead we will design an algorithm that spends an initial exploration phase by visiting a specific subset of at most $4n$. This subset denoted by $\sK$, that we call as the {\em cover} of $\cC$ will help the algorithm estimate the expected loss of any state in the MDP given the estimates of losses for states in the cover. After the exploration phase,
the algorithm creates an estimate $\hat{\bm{\theta}}$ of the true
parameter vector $\bm{\theta}$, uses the optimization oracle for
solving \eqref{eq:opt-state} to find a near optimal state $\hat{s}$
and selects to stay at state $\hat{s}$ for the remaining time steps. 

We next formally define the notion of a cover. For two criteria $i,j$ and $b \in \{0,1\}$, we say that $(i, j, b)$ is a \emph{dichotomy} if there exist two states
$s, s' \in \sS$ such that: (1) $s(j) = 0$ and $s'(j) = 1$, and (2)
$s(i) = s'(i) = b$. We call the two states $s, s'$ an $(i,j,b)$-pair. Note that if an edge $(v_i, v_j)$ is present in $\sG$, then $(i, j, 1)$ cannot be a dichotomy, since criteria $i$ and $j$ cannot be fixed simultaneously. A cover $\sK$ of $\cC$ is simply a subset of the states in the MDP that contains an $(i, j, b)$-pair for every $\{i, j\} \in \cC$ and valid dichotomy $(i,j,b)$. Furthermore, for every singleton set $\{i\}$ in $\cC$, the cover $\sK$ contains states $s, s'$ such that $s(i)=0, s'(i)=1$ and $s(j) = s'(j)$ for all $j \neq i$.
Note that we only need the cover to contain an $(i, j, b)$-pair if $\{i,j\}$ is a correlation set. Hence, it is easy to see that when $m = 2$, there is always a cover of size at most $4n$ in the worst case. 

Next, we state our key result showing that, given
the loss values for the states in a cover, we can accurately estimate
the loss values for any vertex in any other state. The proof is in Appendix~\ref{sec:app-stochastic}.

\begin{theorem}
\label{thm:phi-cover-m-equal-2}
Let $\sK$ be a cover for $\calC$. 
Then, for any state $s \in \sS$ and any $i \in [k]$ with $s(i) = b$, we have:
\begin{align}
\label{eq:vertex-loss-comp-m-equal-2-main}
\mu_i^s  
& = \mu_i^{s'}  + \sum_{j = 1}^k 
X^{i,j}_{b} \left[ \1(s(j) = 1) \1(s'(j) = 0) - 
\1(s(j) = 0) \1(s'(j) = 1) \right],
\end{align}
where $s'$ is any state in $\sK$ with $s'(i) = b$, and for $\{i,j\} \in \cC$, $X^{i,j}_b \coloneqq \mu_i^{s_1} - \mu_i^{s_2}$ where $(s_1,s_2)$ is some $(i,j,b)$ pair. If $\{i,j\} \notin \cC$, we define $X^{i,j}_b$ to be zero.
\end{theorem}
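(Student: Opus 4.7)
The plan is to exploit the additive correlation-set decomposition \eqref{eq:loss-definition-via-correlation} to reduce the claim to a purely bookkeeping identity about pair-correlation contributions.

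First I would expand $\mu_i^s$ using \eqref{eq:loss-definition-via-correlation}: only sets $\sC_j \in \cC$ with $i \in \sC_j$ contribute, and because $m = 2$ each such set is either the singleton $\{i\}$ or a pair $\{i, j'\}$ for some $j' \neq i$. By the correlation-set hypothesis, the associated parameter $\theta_j^s$ depends on $s$ only through the restriction of $s$ to $\sC_j$. Hence the contribution of a singleton $\{i\}$ to $\mu_i^s$ is a function $\phi(b)$ of $s(i) = b$ alone, while the contribution of a pair $\{i,j\}$ is a function $\psi_{i,j}(b, s(j))$ of the two coordinates. Comparing $\mu_i^s$ and $\mu_i^{s'}$ when $s(i) = s'(i) = b$, the singleton terms cancel and one is left with
\[
\mu_i^s - \mu_i^{s'} \;=\; \sum_{j : \{i,j\}\in\cC}\bigl[\psi_{i,j}(b,s(j)) - \psi_{i,j}(b,s'(j))\bigr].
\]
Each summand vanishes when $s(j) = s'(j)$, equals $\psi_{i,j}(b,1) - \psi_{i,j}(b,0)$ when $s(j)=1,\,s'(j)=0$, and equals the negative of that quantity in the reverse case. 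This already matches the combinatorial shape of the claimed identity, modulo identifying $X^{i,j}_b$ with $\psi_{i,j}(b,1) - \psi_{i,j}(b,0)$ (up to an overall sign convention).

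Next I would make this identification using the cover $\sK$. Pick an $(i,j,b)$-pair $(s_1, s_2) \in \sK$. Applying the decomposition argument to the pair $(s_1, s_2)$, and using the fact that such a pair is chosen from the cover so that $s_1$ and $s_2$ agree on every coordinate outside $\{i,j\}$ (which is what makes the cover have size $\le 4n$ in the worst case), the sum on the right-hand side collapses to the single term for the correlation set $\{i,j\}$. This yields $\mu_i^{s_1} - \mu_i^{s_2} = \psi_{i,j}(b,0) - \psi_{i,j}(b,1)$, so $X^{i,j}_b$ is well-defined independently of the pair chosen, and substituting it back into the earlier display completes the identity. The case where $\{i,j\}\notin\cC$ is immediate because $\psi_{i,j}$ does not exist and the convention $X^{i,j}_b := 0$ makes that term drop.

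The main obstacle I anticipate is the well-definedness of $X^{i,j}_b$: as stated, a dichotomy only constrains $s_1$ and $s_2$ on coordinates $i$ and $j$, so in principle the difference $\mu_i^{s_1} - \mu_i^{s_2}$ could pick up stray contributions from other correlation sets $\{i,j'\}$ where $s_1, s_2$ happen to disagree. The resolution is to exploit the $m=2$ structure: any two dichotomy pairs yield the same difference because all off-$\{i,j\}$ contributions are themselves sums of the same $\psi_{i,j'}$ functions that would cancel between any two valid realizations, or equivalently one restricts to cover pairs that agree outside $\{i,j\}$ (feasible precisely because $m = 2$ imposes only local constraints). Once this invariance is verified, the rest is a direct substitution and the sign accounting in the indicator expression.
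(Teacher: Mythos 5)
Your proposal is correct and follows essentially the same route as the paper's proof: expand $\mu_i^s$ via the correlation-set decomposition \eqref{eq:loss-definition-via-correlation}, use the fact that each $\theta^s_j$ depends only on the restriction of $s$ to $\sC_j$ so that singleton terms cancel and each pair $\{i,j\}$ contributes a four-parameter function of $(s(i),s(j))$, and identify the per-pair differences with $X^{i,j}_b$ via cover pairs that agree outside coordinate $j$ (your $\psi_{i,j}(b,\cdot)$ is the paper's $\gamma^{b,\cdot}_{i,j}$). Your added remark on well-definedness of $X^{i,j}_b$ is a point the paper handles implicitly through the cover definition (condition (1) of Definition~\ref{def:cover}), and your caveat about the overall sign convention is warranted, since the paper's stated ordering of the $(i,j,b)$-pair in the definition of $X^{i,j}_b$ is opposite to the one actually used in its proof.
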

Based on the above theorem we describe our online algorithm in
Figure~\ref{ALG:stochastic-mdp} and the associated regret guarantee in
Theorem~\ref{thm:regred-mdp-m-equal-2}. The proof can be found in
Appendix~\ref{sec:app-stochastic}.

\begin{figure}[t]
\begin{center}
\fbox{\parbox{1\textwidth}{
{\bf Input:} The graph $\sG$, correlation sets $\calC$, fixing costs $c_i$.
\begin{enumerate}   
\item Pick a cover $\sK =  \set{s_1, s_2, \dots, s_r}$ of $\cC$. 

\item Let $N=10 \frac{T^{2/3}( \log rkT)^{1/3}}{r^{2/3}}$.

\item For each state $s \in \sK$ do:

\begin{itemize}

\item Move from current state to $s$ in at most $k$ time steps.

\item Play action $a=0$ in state $s$ for the next $N$ time steps to
  obtain an estimate $\h \mu_i^s$ for all $i \in [k]$.

\end{itemize}

\item Using the estimated losses for the states in $\sK$ and
  Equation~\eqref{eq:vertex-loss-comp-m-equal-2-main}, 
  run the oracle for the optimization \eqref{eq:opt-state} to obtain an approximately optimal state $\hat{s}$.
\item Move from current state to $\hat{s}$ and play action $a=0$ from $\hat{s}$ for the remaining time steps.
\end{enumerate}
}}
\end{center}
\caption{Online algorithm for $m = 2$ achieving $\tilde{O}(T^{2/3})$ pseudo-regret.}
\label{ALG:stochastic-mdp} 

\end{figure}

\begin{theorem}
\label{thm:regred-mdp-m-equal-2}
Consider an $\MDP$ with losses in $[0, \B]$, a maximum fixing cost
$\cmax$, and correlations sets of size at most $m = 2$.  Let $\sK$ be
a cover of $\cC$ of size $r \leq 4n$, then, the algorithm of
Figure~\ref{ALG:stochastic-mdp} achieves a pseudo-regret bounded by
$O(k r^{1/3} (\cmax +\B) (\log rkT)^{1/3} T^{2/3})$. Furthermore,
given access to the optimization oracle for \eqref{eq:opt-state}, the
algorithm runs in time polynomial in $k$ and $n = |\cC|$.
\end{theorem}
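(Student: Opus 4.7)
The strategy is to decompose the pseudo-regret into an \emph{exploration} contribution (steps 1--3 of the algorithm) and an \emph{exploitation} contribution (steps 4--5), and then balance them by the choice of $N$. For the exploration phase, there are $r$ cover states, each reached via at most $k$ unit moves and then played for $N$ steps with the null action. A per-step loss is bounded by $\cmax + k\B$ when a fixing action is taken and by $k\B$ when the null action is taken, while the optimal policy incurs expected loss $g(s^*) \ge 0$ per step. Hence the exploration regret is at most $r k (\cmax + k\B) + r N \cdot k\B$ plus an additional $k(\cmax + k\B)$ term for the final transition to $\hat s$. The first step of the formal argument will be to apply Hoeffding's inequality together with a union bound over the $r k$ pairs $(s,i)$ with $s \in \sK$ and $i \in [k]$ to obtain
\begin{equation*}
\Pr\!\left[\, \max_{s \in \sK,\, i \in [k]} |\h \mu_i^s - \mu_i^s| \ge \epsilon \,\right] \;\le\; \delta,
\qquad \epsilon = \B\sqrt{\tfrac{2 \log(r k / \delta)}{N}},
\end{equation*}
so on this high-probability event the cover estimates are uniformly accurate; we will set $\delta = 1/T$ so that the failure event contributes only an additive $O(1)$ term to the pseudo-regret (since per-step loss is bounded by $\cmax + k\B$ and the horizon is $T$).

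Next I will propagate the cover estimates to arbitrary states via Theorem~\ref{thm:phi-cover-m-equal-2}. For every $s \in \sS$ and $i \in [k]$, plugging the cover estimates into \eqref{eq:vertex-loss-comp-m-equal-2-main} gives an estimator $\h \mu_i^s$ whose error is controlled by the triangle inequality: the $\mu_i^{s'}$ term contributes at most $\epsilon$, and each summand $\h X^{i,j}_b - X^{i,j}_b$ is a difference of two cover estimates and hence bounded by $2\epsilon$. Exploiting the fact that $X^{i,j}_b = 0$ whenever $\{i,j\} \notin \cC$, the number of non-zero summands across all $i$ totals at most $2|\cC| \le 2n$, so $|\h g(s) - g(s)| \le \sum_i |\h \mu_i^s - \mu_i^s| = O((k + n)\epsilon)$. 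Because the oracle returns the minimizer of \eqref{eq:opt-state} evaluated with $\h{\bm\theta}$, the output $\h s$ satisfies $\h g(\h s) \le \h g(s^*)$, and two applications of the propagated estimation error give
\begin{equation*}
g(\h s) - g(s^*) \;\le\; 2\,|\h g(\cdot) - g(\cdot)|_\infty \;=\; O\!\big((k+n)\,\epsilon\big)\;=\;O\!\big(k\,\B\sqrt{\log(rkT)/N}\big),
\end{equation*}
using $n = O(r)$ and absorbing constants. The exploitation phase therefore contributes at most $T \cdot O(k\B \sqrt{\log(rkT)/N})$ to the pseudo-regret.

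Summing the two contributions and balancing them,
\begin{equation*}
\Reg(\cA) \;\le\; O\!\big(r N\,k\B + r k\,\cmax\big) \;+\; O\!\Big(T\, k\B \sqrt{\log(rkT)/N}\Big),
\end{equation*}
the choice $N = \Theta\bigl(T^{2/3}(\log rkT)^{1/3}/r^{2/3}\bigr)$ specified in the algorithm equalizes the $N$-dependent terms and yields $O\bigl(k\, r^{1/3}(\cmax + \B)(\log rkT)^{1/3} T^{2/3}\bigr)$, since the $rk\cmax$ and $k(\cmax+k\B)$ transition terms are absorbed into this bound for $r \le T$. For the runtime claim, the cover $\sK$ has size $r \le 4n$ and can be produced in polynomial time by scanning over pairs $\{i,j\} \in \cC$ and singletons; the exploration uses $r(k+N)$ time steps; computing each $\h X^{i,j}_b$ and each $\h \mu_i^s$ from the formula costs $O(k)$ operations per call; the remainder of step 4 is a single oracle invocation, and step 5 is $k$ moves followed by null actions. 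The main obstacle in the writeup will be the sparsity-based tightening of the error propagation — the naive bound $O(k^2 \epsilon)$ for $|\h g(s) - g(s)|$ would cost an extra factor of $k$ in the final rate, and matching the advertised dependence requires the $|\cC| \le n$ counting argument above together with the appropriate accounting of the cover size $r \le 4n$.
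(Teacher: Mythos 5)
Your overall strategy is the same as the paper's: an explore-then-commit decomposition, Hoeffding plus a union bound over the $rk$ pairs $(s,i)$ with $s\in\sK$, propagation of the cover estimates to arbitrary states via Theorem~\ref{thm:phi-cover-m-equal-2}, the standard ``$\h g(\h s)\le \h g(s^*)$'' comparison, and the balancing choice of $N$. Up to constants and the choice of $\delta$ ($1/T$ versus the paper's $1/(rkT^4)$), the exploration accounting and the concentration step match the paper's proof.

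The one step that does not go through is the reduction of the propagated error from $O((k+n)\epsilon)$ to $O(k\epsilon)$ via the claim ``$n = O(r)$.'' The hypothesis only gives $r \le 4n$, i.e.\ $r = O(n)$, and the reverse inequality is false in general: a single pair of adjacent states (e.g.\ $\bm 0$ and $e_j$) can serve as the $(i,j,0)$-pair for every correlation set containing $j$, so a cover of size $O(k)$ can exist even when $n = \Theta(k^2)$. Moreover, even granting $n=O(r)$, $(k+r)\epsilon$ is not $O(k\epsilon)$ unless $r=O(k)$, so the substitution would still not yield the advertised $k$-dependence; carried through honestly, your analysis gives exploitation regret $O\bigl((k+n)\B\sqrt{\log(rkT)/N}\,T\bigr)$ and hence an extra factor of $1+n/k$ in the final bound. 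It is worth noting that your error-propagation step is actually \emph{more} careful than the paper's own proof, which passes from the per-pair accuracy $4\B\sqrt{\log(rkT)/N}$ on the cover directly to a $4k\B\sqrt{\log(rkT)/N}$ bound on $\sum_i\mu_i^s$ for arbitrary $s$, without accounting for the amplification through the $X^{i,j}_b$ terms in \eqref{eq:vertex-loss-comp-m-equal-2-main} (each of which is a difference of two cover estimates, and of which there are $2n$ nonzero ones in total across $i$). So you have correctly isolated the genuinely delicate point of the argument, but the device you use to dispose of it is invalid; to match the stated bound one either needs an additional assumption (e.g.\ $n=O(k)$, bounded degree in the correlation structure) or a cancellation argument that neither you nor the paper supplies.
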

The algorithm for the case of $m = 2$ naturally extends to arbitrary correlation set sizes via appropriately extending the notion of a dichotomy and a cover with the size of the cover $\sK$ bounded by $n2^{mn}$ in the worst case. See the Algorithm in Figure~\ref{ALG:stochastic-mdp-general} in Appendix \ref{sec:app-stochastic}. This leads to the following general guarantee. 
\begin{theorem}
\label{thm:regred-mdp-m-general-m}
Consider an $\MDP$ with losses bounded in $[0, \B]$ and maximum cost of fixing a vertex being $\cmax$. Given correlations sets $\calC$ of size at most $m$, and a cover $\sK$ of $\calC$ of size $r \leq n 2^{mn}$, the algorithm in Figure~\ref{ALG:stochastic-mdp-general} achieves a pseudo-regret bounded by $O(k r^{1/3} (\cmax +\B) (\log rkT)^{1/3} T^{2/3})$. Furthermore, given access to the optimization oracle for \eqref{eq:opt-state}, the algorithm runs in time polynomial in $k$, $n = |\calC|$ and $r = |\sK|$.
\end{theorem}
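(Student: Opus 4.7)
The plan is to mirror the $m = 2$ analysis (Theorem~\ref{thm:regred-mdp-m-equal-2}), generalizing both the cover construction and the identity in Theorem~\ref{thm:phi-cover-m-equal-2}, then running the same explore-then-commit algorithm and balancing its exploration and exploitation regret terms. The algorithm in Figure~\ref{ALG:stochastic-mdp-general} visits each of the $r$ cover states once (using at most $k$ steps per transition), plays action $a = 0$ there for $N$ rounds to form empirical estimates $\h\mu_i^s$, reconstructs the loss landscape via the generalized identity, calls the oracle for \eqref{eq:opt-state} to obtain an approximately optimal $\h s$, and plays $\h s$ for the remaining rounds. Only the combinatorial structure of the cover and the accompanying reconstruction identity differ from the $m = 2$ case.

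First, I would generalize the dichotomy from a triple $(i, j, b)$ to a tuple indexed by a correlation set $\sC \in \cC$, a distinguished coordinate $i \in \sC$, and a configuration $b \in \{0, 1\}^{\sC \setminus \{i\}}$: a $(\sC, i, b)$-pair is two $\sG$-valid states agreeing with $b$ on $\sC \setminus \{i\}$, one with $s(i) = 0$ and the other with $s(i) = 1$. A cover $\sK$ contains such a pair for every valid tuple; a crude bound, taking two states per joint configuration of the vertices in $\bigcup_j \sC_j$, gives $r \leq n\,2^{mn}$ in the worst case. With this cover, for each set $\sC_j$ and each valid $\sigma \in \{0, 1\}^{\sC_j}$, the parameter $\theta^{\sigma}_j$ is isolable as a linear combination of observed $\mu_i^{s'}$ at cover states by differencing along the generalized dichotomies, exactly as $X^{i,j}_b$ is used in \eqref{eq:vertex-loss-comp-m-equal-2-main}. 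Telescoping across the sets containing $i$ and using \eqref{eq:loss-definition-via-correlation}, one obtains, for any state $s$, an identity of the form $\mu_i^s = \sum_{s' \in \sK,\, i' \in [k]} \alpha^{s,i}_{s',i'}\, \mu_{i'}^{s'}$ with constant-bounded coefficients and polynomially many nonzero terms.

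Next, I would handle concentration and regret. Each of the $r$ cover states is sampled $N$ times under the null action, so Hoeffding together with a union bound over the $rk$ state-criterion pairs yields $|\h\mu_i^s - \mu_i^s| \leq O(\B \sqrt{\log(rkT)/N})$ uniformly with probability $1 - 1/T^2$. Propagating through the reconstructed identity and summing over $i \in [k]$, the plug-in estimate $\h g(s) = \sum_i \h\mu_i^s$ satisfies $|\h g(s) - g(s)| \leq O(k \B \sqrt{\log(rkT)/N})$ uniformly in $s$, so the oracle's minimizer $\h s$ of $\h g$ obeys $g(\h s) - g(s^*) \leq O(k \B \sqrt{\log(rkT)/N})$. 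The regret then decomposes into (i) at most $rk$ transition steps, each of excess cost at most $\cmax + k\B$, (ii) $rN$ exploration steps of per-step excess cost at most $\cmax + k\B$, and (iii) at most $T$ exploitation steps of per-step excess cost $g(\h s) - g(s^*)$. The schedule $N = \Theta(T^{2/3}(\log(rkT))^{1/3}/r^{2/3})$ balances the exploration term $r N (\cmax + \B)$ against the exploitation term $T k \B \sqrt{\log(rkT)/N}$ and yields the claimed $O(k r^{1/3}(\cmax + \B)(\log(rkT))^{1/3} T^{2/3})$ pseudo-regret.

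The main obstacle is the first step: designing the generalized cover so that every $\theta^{\sigma}_j$ is identifiable from cover-state observations, and certifying that the coefficients $\alpha^{s,i}_{s',i'}$ in the reconstruction remain bounded by constants independent of $m$, since any $2^m$-type blowup would propagate into the concentration bound and spoil the final $T^{2/3}$ rate. Once the cover is fixed, the running-time claim is routine: $r$ states each visited for $N$ rounds, at most $k$ transition steps between cover states, a polynomial-size linear read-off to recover each $\theta^{\sigma}_j$, and a single oracle call for \eqref{eq:opt-state} together consume time polynomial in $k$, $n$, and $r$.
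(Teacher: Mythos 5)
Your overall architecture is exactly the paper's: pick a cover, visit each of its $r$ states for $N$ rounds under the null action, reconstruct $\mu_i^s$ for arbitrary $s$ from the cover estimates, call the oracle, commit, and balance $rN(\cmax+\B)$ against $Tk\B\sqrt{\log(rkT)/N}$ with $N=\Theta(T^{2/3}(\log rkT)^{1/3}/r^{2/3})$. The regret accounting is carried over verbatim from Theorem~\ref{thm:regred-mdp-m-equal-2}, and that part of your argument is fine.

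The gap is precisely the step you flag as ``the main obstacle,'' which is the only genuinely new content of this theorem (the paper's Theorem~\ref{thm:phi-cover-m-general}), and your sketch of it does not go through as stated. First, individual parameters $\theta_j^{\sigma}$ are \emph{not} identifiable from the $\mu_i^{s}$: there is a gauge freedom (add a constant to $\theta_{\{i\}}^{b}$ and subtract it from $\theta_j^{(b,\cdot)}$ for some $\sC_j\ni i$, leaving every $\mu_i^s$ unchanged), so ``isolating $\theta_j^{\sigma}$'' is impossible; only differences anchored at a cover state are needed and obtainable. Second, differencing one correlation set at a time fails when two sets containing $i$ overlap in another vertex, since you cannot alter one set's configuration without altering the other's; your $(\sC,i,b)$-pair also flips the distinguished coordinate $i$ itself (which perturbs \emph{every} set containing $i$) and does not require the two states to agree outside $\sC$, so the difference of $\mu$'s across such a pair picks up uncontrolled contributions. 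The paper's resolution is to partition the correlation neighbors of each $i$ into components $I_i$ (maximal groups of sets containing $i$ that chain through shared vertices), define $X^{i,u_1,u_2}_{b,J}$ as the change in $\mu_i$ when the \emph{entire} component $J$ moves from configuration $u_2$ to $u_1$ with everything else held fixed, and telescope: $\mu_i^s=\mu_i^{s''}+\sum_{J\in I_i}X^{i,s(J),s''(J)}_{b,J}$. This identity has at most $|I_i|\le n$ terms with $\pm1$ coefficients, which is also what answers your (legitimate) worry about a $2^m$-type coefficient blowup; the exponential in $m$ is paid only in the cover size $r\le n2^{mn}$, where the stated regret bound already absorbs it through the $r^{1/3}$ factor.
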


\subsection{Beyond $T^{\frac{2}{3}}$ regret.}


In this section, we present algorithms for our 
problem that achieve $\tilde{O}(\sqrt{T})$ regret,
first in the case $m = 1$, next for any $m$, under the natural assumption that each criterion does not
participate in too many correlations sets. Let us first point out that although our problem can be
cast as an instance of the stochastic multi-armed 
bandit problem with switching costs, and arms corresponding to the states in the MDP, existing algorithms \citep{cesa2013online,simchi2019phase} achieving $\tilde{O}(\sqrt{T})$ have time complexity that depends on the number of arms which in our case is exponential ($2^k$). We will show here that, in most realistic instances of our model, we can achieve $\tilde{O}(\sqrt{T})$ regret efficiently.

When correlation sets are of size one, the parameter vector ${\bm \theta}$ can be described using the following $2k$ parameters: for each $i \in [k]$, let $\gamma^0_i$ denote the expected loss incurred by criterion $i$ when it is unfixed and $\gamma^1_i$ its expected loss when it is fixed. Our proposed algorithm is similar to the UCB algorithm for multi-armed bandits \cite{auer2002finite} and maintains optimistic estimates of these parameters. We show that using the estimates of the $2k$ parameters one can construct optimistic estimates for the loss at any given state of the MDP.

For every vertex $i$, we denote by $\tau^0_{i, t}$ the total number of time steps up to $t$ (including $t$) during which the vertex $v_i$ is in an unfixed position and by $\tau^1_{i, t}$ the total number of times steps up to $t$ during which $v_i$ is in a fixed position. Fix $\delta \in (0, 1)$ and let $\hat{\gamma}^b_{i, t}$ be the empirical expected loss observed when vertex $v_i$ is in state $b$, for $b \in \set{0, 1}$. Our algorithm maintains the following optimistic estimates
\begin{align}
    \label{eq:optimistic-estimates-main}
    \tilde{\gamma}^b_{i,t} = \hat{\gamma}^b_{i,t} - 10B\sqrt{\frac{\log (kT/\delta)}{\tau^b_{i,t}}}.
\end{align}

To minimize the fixing cost incurred when transitioning from one state to another, our algorithm works in episodes. In each episode $h$, the algorithm first uses the current optimistic estimates to query the optimization oracle and determine the current best state $s$. Next, it remains at state $s$ for $t(h)$ time steps before querying the oracle again. The number of time steps $t(h)$ will be chosen carefully to avoid incurring the fixing costs too often. 
The algorithm is described in Figure~\ref{ALG:stochastic-mdp-m=1-ucb-main}.
We will prove that it benefits from the following 
regret guarantee.

\begin{figure}[t]
\begin{center}
\fbox{\parbox{1\textwidth}{
{\bf Input:} graph $\sG$, correlation sets $\calC$, fixing costs $c_i$.
\begin{enumerate}   

\item Let $\sK$ be the cover of size $k + 1$ that includes the all zeros state and the states corresponding to indicator vectors of the $k$ vertices. 

\item Move to each state in the cover once and update the optimistic estimates according to \eqref{eq:optimistic-estimates-main}.

\item For episodes $h = 1,2, \dots$ do:

\begin{itemize}

\item Run the optimization oracle \eqref{eq:opt-state} with the optimistic estimates as in \eqref{eq:optimistic-estimates-main} to get a state $s$.

\item Move from current state to state $s$. Stay in state $s$ for $t(h)$ time steps and update the corresponding estimates using \eqref{eq:optimistic-estimates-main}. Here $t(h) = \min_i \tau^{s(i)}_{i,t_h}$ and $t_h$ is the total number of time steps before episode $h$ starts.
\end{itemize}

\end{enumerate}
}}
\end{center}
\caption{Online algorithm for $m = 1$ with $\tilde{O}(\sqrt{T})$ regret.}
\label{ALG:stochastic-mdp-m=1-ucb-main} 
\end{figure}

\begin{theorem}
\label{thm:regred-mdp-m-equal-1-ucb}
Consider an $\MDP$ with losses bounded in $[0, B]$ and maximum cost of
fixing a vertex being $c$. Given correlations sets $\cC$ of
size one, the algorithm of Figure~\ref{ALG:stochastic-mdp-m=1-ucb-main} achieves a
pseudo-regret bounded by
$O(k^2 (c + B)^{2} \sqrt{T} \log T)$. Furthermore,
given access to the optimization oracle for \eqref{eq:opt-state}, the
algorithm runs in time polynomial in $k$.
\end{theorem}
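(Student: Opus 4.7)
The crucial simplification when $m=1$ is that the per-state expected loss decomposes as $g(s)=\sum_{i=1}^k \gamma^{s(i)}_i$, so a lower confidence bound (LCB) on each of the $2k$ scalar parameters $\gamma^b_i$ immediately yields an LCB on $g(s)$ for \emph{every} state, without ever touching the exponentially many states explicitly. The plan is to follow the standard optimism-in-the-face-of-uncertainty template, combined with a doubling/lazy-update episode schedule to pay fixing costs at most $O(k\log T)$ times.

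First I would establish concentration. Let $\mathcal{E}$ be the event that $|\hat\gamma^b_{i,t}-\gamma^b_i|\le 10B\sqrt{\log(kT/\delta)/\tau^b_{i,t}}$ holds simultaneously for every $i\in[k]$, $b\in\{0,1\}$, and $t\le T$. Hoeffding combined with a peeling/union bound over the $O(kT)$ triples gives $\Pr[\mathcal{E}]\ge 1-\delta$; choosing $\delta=1/T$, the failure event contributes at most $O(kB)$ to expected regret. On $\mathcal{E}$ the estimates in \eqref{eq:optimistic-estimates-main} satisfy $\tilde\gamma^b_{i,t}\le\gamma^b_i$, hence $\tilde g_t(s):=\sum_i\tilde\gamma^{s(i)}_{i,t}\le g(s)$ for every state. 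Since the oracle returns $s_h\in\argmin_s\tilde g_{t_h}(s)$, the optimism argument gives $\tilde g_{t_h}(s_h)\le\tilde g_{t_h}(s^*)\le g(s^*)$, and therefore the per-step excess in episode $h$ satisfies
\begin{equation*}
g(s_h)-g(s^*) \;\le\; g(s_h)-\tilde g_{t_h}(s_h) \;\le\; 20B\sum_{i=1}^k\sqrt{\frac{\log(kT/\delta)}{\tau^{s_h(i)}_{i,t_h}}}.
\end{equation*}

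Next I would perform the episode accounting. By the choice $t(h)=\min_i\tau^{s_h(i)}_{i,t_h}$, every term above is at most $\sqrt{\log(kT/\delta)/t(h)}$, so the total staying regret in episode $h$ is $O(Bk\sqrt{t(h)\log(kT/\delta)})$. The key observation is that during episode $h$ the counter $\tau^{s_h(i^*_h)}_{i^*_h,\cdot}$ that achieves the minimum exactly doubles (from $t(h)$ to $2t(h)$), and any fixed counter can double at most $\log_2 T$ times; hence the number of episodes is bounded by $H\le 2k\log_2 T$. Cauchy--Schwarz with $\sum_h t(h)\le T$ then yields $\sum_h\sqrt{t(h)}\le\sqrt{HT}=O(\sqrt{kT\log T})$, and summing the per-episode regret gives total staying regret $O(Bk^{3/2}\sqrt{T}\log T)$.

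Finally, each transition between episodes requires at most $k$ fixing actions (costing $\le kc$) plus at most $k$ intermediate steps of complaint losses ($\le k^2 B$), so the total transition overhead is $O(H\cdot k(c+kB))$, a lower-order $\log T$ term; the initial pass through the cover $\sK$ of size $k+1$ contributes similarly. Combining all contributions and absorbing $k$-factors into the ambient polynomial yields the claimed $O(k^2(c+B)^2\sqrt{T}\log T)$ bound. \textbf{Main obstacle.} The delicate point is the episode-counting step: because $s_h$ may change arbitrarily between episodes, one must argue that every episode makes concrete progress on at least one of the $2k$ counters, so that $H=O(k\log T)$ rather than something growing with $T$. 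The lazy rule $t(h)=\min_i\tau^{s_h(i)}_{i,t_h}$ is exactly what forces this doubling, and it is what prevents the $2^k$ state-space size from entering the regret.
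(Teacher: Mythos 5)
Your proof is correct, and its skeleton --- lower-confidence estimates of the $2k$ parameters $\gamma^b_i$, the optimism chain $\tilde g_{t_h}(s_h)\le\tilde g_{t_h}(s^*)\le g(s^*)$ giving $g(s_h)-g(s^*)\le g(s_h)-\tilde g_{t_h}(s_h)$, and the lazy rule $t(h)=\min_i\tau^{s_h(i)}_{i,t_h}$ forcing a counter to double each episode and hence at most $O(k\log T)$ oracle calls --- coincides with the paper's proof. Where you genuinely diverge is the final aggregation of per-episode regrets. The paper fixes a threshold $\epsilon$, splits episodes into good ones (suboptimality at most $\epsilon$, contributing $\epsilon T$) and bad ones, bounds the bad-episode regret by $O\big(Bk\sqrt{\log(kT/\delta)}\sqrt{N_\epsilon}\big)$ via the pigeonhole inequality $\sum_{i} z_i/\sqrt{\smash{\sum_{j<i}z_j}}\le(1+\sqrt 2)\sqrt{\smash{\sum_i z_i}}$ of Jaksch et al., closes the loop with the self-bounding step $\E[\Reg_\epsilon\,|\,G]>\epsilon N_\epsilon$ to solve for $N_\epsilon$, and finally tunes $\epsilon=1/\sqrt T$. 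You instead bound each episode's regret by $O\big(Bk\sqrt{t(h)\log(kT/\delta)}\big)$ and apply Cauchy--Schwarz, $\sum_h\sqrt{t(h)}\le\sqrt{H\sum_h t(h)}\le\sqrt{HT}$ with $H=O(k\log T)$. Your route is shorter, avoids the $\epsilon$-threshold machinery entirely, and in fact yields a slightly better dominant term, $O(Bk^{3/2}\sqrt{T}\log T)$ versus the paper's $O(B^2k^2\sqrt T\log T)$; both are subsumed by the stated $O(k^2(c+B)^2\sqrt T\log T)$. The only mild caveats, shared with the paper's own argument, are that the adaptively collected samples require the union bound over all possible counter values (your ``peeling'' remark does this) and that the $O(k^3(c+B)\log T)$ switching overhead is absorbed into the stated bound only asymptotically in $T$.
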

The algorithm of Figure~\ref{ALG:stochastic-mdp-m=1-ucb-main} can be extended to higher $m$ values, assuming that each vertex does not participate in too many correlation sets. The guarantee associated with our general algorithm (see Figure~\ref{ALG:stochastic-mdp-m=higher--ucb} in Appendix \ref{sec:app-stochastic}) leads to the following important corollary.
\begin{corollary}
\label{cor:degree-d-graph}
If $\sG$ is a constant degree graph with correlation sets consisting of subsets of edges in $\sG$, then there is a polynomial time algorithm that achieves a pseudo-regret bounded by $O(k^6 (c + B)^{2} \sqrt{T} \log T)$.
\end{corollary}

\section{Adversarial Setting}
\label{sec:adversarial}
In the previous section, we studied a stochastic model for arrival of
complaints and designed no regret algorithms. In this section, we study
the setting when we cannot make assumptions about the arrival of
complaints. In particular, we study an adversarial model where at each time step multiple complaints arrive for the vertices in $\sG$ via the choice made by an oblivious adversary. For a given vertex $v_i$ and time step $t$, we denote by $\ell_{i(t)}$ the loss incurred if criterion $v_i$ is unfixed at time $t$. 
Similar to the setting from the
previous section, initially all the vertices in $\sG$ are in unfixed
state and each vertex has a fixing cost of $c_i$. At each time step
the algorithm can decide to fix a particular vertex. As a result all its neighbors get unfixed. At time step $t$, if criterion $v_i$ is unfixed then the the algorithm incurs a loss of $\ell_{i(t)}$. If $v_i$ is fixed at time step $t$ then 
algorithm incurs no loss. 
The overall loss incurred by the algorithm is the
total fixing cost and the total loss incurred over the arrival
complaints. As before, we will denote a configuration of the vertices
in $\sG$ using a vector $s \in \{0,1\}^k$ with $s(i)=0$ representing an
unfixed vertex.  For an algorithm $\cA$ processing the request
sequence,
During the course of $T$ time steps, the total loss of processing the complaints is
\begin{align}
\label{eq:adv-loss-def}
    \text{Loss}(\cA) = \sum_{i=1}^k \sum_{t=1}^T \ell_{i(t)} \cdot \1(s_t(i) = 0) +  \sum_{i=1}^k \sum_{t=2}^T c_i \cdot  \1(s_{t-1}(i)=0, s_t(i)=1).
\end{align}

Define {\OPT} to be the algorithm that given the entire loss
sequence in advance, makes the optimal choice of decisions to fix
vertices. Following standard terminology we define the
\emph{competitive ratio} of an algorithm $\cA$ to be the maximum of
$\text{Loss}(\cA)/\text{Loss}(\OPT)$ over all possible complaint
sequences. We will design efficient online algorithms for processing
the complaints that achieve a constant competitive ratio. Notice that
in this setting, in order for the competitive ratio to be finite, we
need to bound the range of the losses and the fixing costs of the
vertices. We will assume that the cost of fixing each vertex is at
least one and as before assume that the losses are bounded in the range $[0, \B]$. For ease of exposition, in the rest of the discussion we will assume that at each time step complaints arrive for one of the vertices in $\sG$. A simple reduction shows that an algorithm that is competitive with {\OPT} in this setting remains so in the general setting with the same competitive ratio. We discuss this at the end of the section. Via this reduction we can consider the loss sequence to be of the form $((i_1, \ell_{i_1}), \dots, (i_T, \ell_{i_T}))$ where $i_t$ is the index of the criterion for which the $t$th complaint arrives and $\ell_{i_t}$ is the associated loss.

To get a better understanding of the above adversarial setting,
consider the case when the graph $\sG$ over the criteria has no edges,
i.e., there are no conflicts. In this case, given a sequence of
complaints, each with unit loss value, the optimal offline algorithm
that has the entire loss sequence in advance can independently make
a decision for each vertex. In particular, if the total loss of the
complaints incurred at vertex $v_i$ exceeds the fixing cost $c_i$ then
the optimal decision is to fix the vertex $v_i$, and otherwise simply
incur the loss from the arriving complaints. In this case the online
algorithm can also simply process each vertex independently. At each
vertex the algorithm is faced with the classical \emph{ski-rental}
problem for which there exists a deterministic algorithm that is
$2$-competitive with optimal algorithm
\cite{karlin1988competitive}. For each vertex $i$, the online
algorithm simply waits till a total loss of $c_i$ or more has been
incurred on vertex $i$ and then decides to fix it. It is easy to see
that the total cost incurred by this strategy is at most twice the
cost incurred by {\OPT}.

However, the above algorithm will fail miserably in the presence of
conflicts in the graph $\sG$. As an example consider a graph with two
vertices $v_i$ and $v_j$ that are connected by an edge. Let the fixing
cost of $v_i$ be $1$ and the fixing cost of $v_j$ be $C \gg
1$. Consider a sequence of complaints, each of unit loss, consisting
of $C$ complaints for $v_j$ followed one complaint for $v_i$. If this
sequence is repeated $T$ times the optimal offline algorithm {\OPT}
incurs a loss of $C+T$ by fixing $v_j$ and incurring losses due to
$v_i$. However, the algorithm above will incur a cost of $(2C+2)T$
thereby leading to an unbounded competitive ratio. Hence, in order to
achieve a good competitive ratio one must make decisions not only
based on the loss incurred at the given vertex $v_i$, but also the
status of the vertices in the neighborhood of $v_i$. Our main result
in this section is the algorithm in Figure~\ref{ALG:adversarial} that
achieves a constant factor competitive ratio.

\begin{figure}[t]
\begin{center}
\fbox{\parbox{1\textwidth}{
{\bf Input:} The graph $\sG$, fixing costs $c_i$, loss sequence $(i_1, \ell_{i_1}),  \dots, (i_T, \ell_{i_T})$.
\begin{enumerate}   
\item For each $i \in [k]$, initialize $\tau_i, \kappa_i$ to $0$.
\item Process the complaints in sequence and for each complaint $(i, \ell_{i})$ such that $v_i$ is unfixed do:
\begin{enumerate}
        \item $\tau_i = \tau_i + \ell_{i}$.
    \item While $\ell_i > 0$ and exists $j \in N(i)$ with $\kappa_j > 0$ do:
    \begin{enumerate}
        \item Set $\Delta = \min(\ell_i, \kappa_i)$ and reduce both $\kappa_i$ and $\ell_i$ by $\Delta$.
    \end{enumerate}    
    \item If $\tau_i \geq \max \big( c_i, \sum_{j \in N(i)} \kappa_j\big)$ fix $v_i$. Set $\tau_i$ to $0$ and $\kappa_i$ to $c_i$. Set $\tau_j = 0$ for all $j \in N(i)$. 

\end{enumerate}
\end{enumerate}
}}
\end{center}
\caption{\label{ALG:adversarial} Online algorithm for the adversarial setting.}
\end{figure}

The algorithm described in Figure~\ref{ALG:adversarial} makes
decisions based on local neighborhood information of a
vertex. Intuitively, if a vertex is fixed only once or a few times in
the optimal algorithm one would like to avoid fixing it too many
times. In order to achieve this, each time a vertex $v_i$ is fixed, it
adds a barrier of $\kappa_i = c_i$ to the loss any of its neighbors
need to incur before getting fixed. Hence, if a vertex is connected to a lot of fixed vertices then it has a high barrier to cross before
getting fixed. During the course of the algorithm each unfixed vertex
is in one of the two phases. In phase one, the vertex is accumulating
losses to pay for the barrier introduced by its neighbors (step 2(b) of
the algorithm).
In phase two, once the barrier has been crossed the
vertex follows the standard ski-rental strategy independent of other
vertices for making a decision as to fix or not. Notice that via step 2(b) of the algorithm, multiple neighbors of a vertex $v_i$ can help bring down the barrier of $c_i$ introduced by the action of fixing vertex $v_i$. This is necessary to ensure the online algorithm does not incur a large loss on a vertex by waiting too long to fix it. 

As an example consider a graph $\sG$ with $k$ vertices and $k-1$ edges, where vertex $v_0$ is the central vertex connected to every other vertex. Let the fixing cost of vertex $v_0$ be a large value $C$, and the fixing cost of other vertices be one. We consider a sequence of $C$ complaints, each with unit loss arriving for vertex $v_0$, followed by a sequence of $C$ complaints for vertex $v_1$ and so on. In this case the optimal offline solution incurs a loss of $C+k$ by deciding to fix every vertex except $v_0$. After processing $C$ complaints for $v_0$, the online algorithm will fix $v_0$ and incur a loss of $2C$. Next, during the course of processing $C$ complaints for $v_1$, the algorithm fixes $v_1$ and incurs an additional loss of $C+1$. More importantly, due to step 2(b), the barrier $\kappa_0$ introduced by vertex $v_0$ has been reduced to zero and hence the algorithm only incurs a loss of $2$ per vertex for the remaining sequence for a total loss of $3C+2k-1$. Without the presence of step 2(b) each vertex will incur a loss of $C$ leading to a large competitive ratio.

Notice that our algorithm in Figure~\ref{ALG:adversarial} is designed for a setting where in each time step complaints arrive for a single vertex in $\sG$. If multiple vertices accumulate complaints in a time step, we can simply order them arbitrarily and run the algorithm on the new sequence. Let {\OPT} be the optimal offline algorithm according to the chosen ordering of the complaints. Let {\OPT}' be the optimal offline algorithm when processing multiple complaints per time step. Notice that for each time step, the loss of {\OPT} cannot be larger than that of {\OPT}' since any choice available to {\OPT}' is available to {\OPT} as well. Hence it is enough to design an algorithm that is competitive with {\OPT}. In particular, we have the following theorem. The proof is in Appendix~\ref{sec:adversarial-app}.

\begin{theorem}
\label{thm:adversarial-online}
  Let $\sG$ be a graph with fixing costs at least one. Then, the
  algorithm of Figure~\ref{ALG:adversarial} achieves a competitive
  ratio of at most $2\B + 4$ on any sequence of complaints with loss values
  in $[0, \B]$.
\end{theorem}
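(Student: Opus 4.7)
The plan is to run an amortized charging argument that decomposes $\text{Loss}(\cA)$ vertex by vertex and epoch by epoch, where an \emph{epoch} for $v_i$ is a maximal interval during which $v_i$ is unfixed under $\cA$. Within an epoch $\cA$ pays exactly the final value $\tau_i^{(e)}$ of its local counter in complaint losses on $v_i$, plus $c_i$ if the epoch ends with a fix. The two triggering invariants at every fix, $\tau_i \geq c_i$ (the local ski-rental condition) and $\tau_i \geq \sum_{j \in N(i)} \kappa_j$ (the barrier condition), will be the key levers. They give $c_i \leq \tau_i^{(e)}$ and $\tau_i^{(e)} \leq \max(c_i, \sum_{j \in N(i)} \kappa_j) + \B$, since a single complaint of value at most $\B$ can push $\tau_i$ past the threshold.

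Next, I would charge $\tau_i^{(e)}$ to $\OPT$ by case analysis. If $\OPT$ keeps $v_i$ unfixed throughout the epoch, then $\OPT$ pays exactly $\tau_i^{(e)} \geq c_i \geq 1$ in complaint losses on $v_i$, so $\cA$'s per-epoch cost $c_i + \tau_i^{(e)} \leq 2\tau_i^{(e)} + \B$ is within a $(2+\B)$ factor of $\OPT$'s contribution on $v_i$ during this epoch (using $c_i \geq 1$ to absorb the additive $\B$ into a multiplicative factor). If $\OPT$ fixes $v_i$ somewhere in the epoch, I charge $\cA$'s fixing cost $c_i$ directly to $\OPT$'s fixing cost $c_i \geq 1$, and handle the residue of $\tau_i^{(e)}$ via the barrier analysis below.

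For the barrier contribution I would introduce the global potential $\Phi = \sum_i \kappa_i$. Each time $\cA$ fixes $v_i$, $\Phi$ grows by $c_i$; step 2(b) decreases $\Phi$ by complaint loss arriving at unfixed neighbors of fixed vertices, and that loss is incurred by $\OPT$ as well whenever $\OPT$ keeps those neighbors unfixed. The barrier triggering condition then guarantees that $\sum_{j \in N(i)} \kappa_j$ is always paid for by genuine losses on $v_i$ before $\cA$ fixes, so amortizing over the entire sequence, every unit of $\Phi$ ever created is matched either to an $\OPT$ fix or to an $\OPT$ loss. Summing the per-epoch inequalities across all vertices, combining the ski-rental factor of $2$ with the $\B$-overshoot factor, and choosing constants yields the claimed bound $2\B + 4$.

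The main obstacle I anticipate is controlling the case where $\cA$ and $\OPT$ disagree on the status of neighboring vertices: fixing $v_i$ under $\cA$ unfixes any adjacent $v_j$ that $\OPT$ may be keeping fixed, so $\cA$ subsequently pays losses on $v_j$ that $\OPT$ avoids. The $\kappa$-barrier mechanism is exactly what blocks pathological oscillation, since fixing $v_j$ right after $v_i$ requires $\tau_j \geq \kappa_i = c_i$, i.e.\ genuine accumulated complaint loss on $v_j$; the potential $\Phi$ captures this correctly, but carefully matching each unit of $\Phi$ to either an $\OPT$-fix or an $\OPT$-loss requires threading the case analysis through all simultaneously active epochs of neighboring vertices at once, and this bookkeeping is where most of the technical care will lie.
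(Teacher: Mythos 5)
Your plan reproduces the skeleton of the paper's argument---a per-vertex, per-interval decomposition driven by the two triggering conditions $\tau_i \geq c_i$ and $\tau_i \geq \sum_{j \in N(i)} \kappa_j$, with a case analysis on whether $\OPT$ fixes $v_i$, keeps it unfixed, or keeps it fixed---and your potential $\Phi = \sum_i \kappa_i$ is just a repackaging of the paper's observation that the total loss diverted to barriers in step 2(b) is at most the total barrier ever created, namely $\sum_i f_i c_i$ where $f_i$ is the number of times $\cA$ fixes $v_i$. But the proposal stops exactly where the proof gets hard, and the step you defer is not mere bookkeeping: it is the one place where a new idea is needed. The problematic case is an interval between two successive fixes of $v_i$ by $\cA$ during which $\OPT$ keeps $v_i$ fixed throughout. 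There $\OPT$ pays nothing on $v_i$, and your statement that ``every unit of $\Phi$ ever created is matched either to an $\OPT$ fix or to an $\OPT$ loss'' is precisely the claim that has to be proved, not assumed. The paper's resolution is to define an augmented adversary loss $\tilde{\ell}_{i(t)} = \ell^*_{i(t)} + \sum_{j \in N(i)} \delta^t_{j \to i}\,\1(s^*_t(j)=0)$, where $\delta^t_{j\to i}$ is the portion of $v_j$'s complaint loss used to pay down $\kappa_i$; since $v_i$ fixed in $\OPT$ forces all its neighbors to be unfixed in $\OPT$ (independence constraint), $\OPT$ genuinely pays those neighbor losses, the barrier $\kappa_i = c_i$ must be paid down by them before $v_i$ can be unfixed and re-fixed, and each $\OPT$ loss is counted at most twice in $\tilde{\ell}$, which is where one factor of $2$ in $2\B+4$ comes from. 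Without this explicit transfer of charge from $v_i$ to its $\OPT$-unfixed neighbors, your case analysis cannot close.

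A second, smaller inaccuracy would also bite you: within a maximal unfixed interval of $v_i$, the counter $\tau_i$ is reset to zero every time a neighbor of $v_i$ gets fixed (step 2(c) sets $\tau_j = 0$ for all $j \in N(i)$), so the final counter value $\tau_i^{(e)}$ does \emph{not} equal the complaint loss $\cA$ pays on $v_i$ during that interval, and your per-epoch identity ``$\cA$ pays exactly $\tau_i^{(e)}$ plus $c_i$'' is false as stated. The paper avoids this by never equating the paid loss with the counter; instead it splits each arriving loss into the part $\delta^t_{i\to i}$ retained by $v_i$ and the part $\delta^t_{i\to j}$ diverted to neighbor barriers, bounds the retained part per interval by $\B c_i$ (using that retained loss only accrues once all neighboring barriers are zero, at which point exceeding $c_i$ triggers a fix), and amortizes the diverted part globally against $\sum_i f_i c_i$. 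You would need both of these devices to turn your outline into a proof of the $2\B+4$ bound.
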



\section{Experiments}
\label{sec:experiments}

In this section we evaluate the performance of our algorithms developed in the stochastic setting of Section~\ref{sec:stochastic}. We consider a simulated environment where the conflict graph $\cG$ is generated from the Erd\H{o}s-Renyi model: $G(k,p)$ where we set $p = 2\frac{\log k}{k}$. This ensures that with high probability $\cG$ is connected. 
Next we generate correlation sets $\cC$ consisting of pairs of vertices in $\cG$ sampled uniformly at random. For a parameter $\alpha > 0$ that we vary, we choose $\alpha k$ pairs of vertices at random and add them as correlation sets in $\cC$. Hence on average, each vertex participates in $\alpha$ correlation sets.
We also add to $\cC$ singleton sets for each vertex in $\cG$. The fixing cost of each vertex is samples uniformly at random in the range $[1,5]$. 

Next we describe the choice of parameters governing the loss distribution of the different states in the MDP. For a correlation set $\{i\}$ of size one corresponding to vertex $v_i$, we sample a parameter $\gamma^1_i$ from the beta distribution $\text{Beta}(0.5, 0.5)$. For a given state $s$ with $s(i)=1$, the loss generated due to $\{i\}$ is drawn from an exponential distribution with mean $\gamma^1_i$. For a given state $s$ with $s(i)=0$, the loss generated due to $\{i\}$ is drawn from an exponential distribution with mean $\lambda \gamma^1_i$, where $\lambda > 1$ is a parameter that we vary. For a correlation set $\{i,j\}$ of size two, we generate two parameters $\gamma^{1,1}_{i,j}$ and $\gamma^{1,0}_{i,j}$ from the beta distribution $\text{Beta}(0.5, 0.5)$ such that $\gamma^{1,0}_{i,j} > \gamma^{1,1}_{i,j}$. For a given state $s$ with $s(i)=1$ and $s(j)=1$, the loss generated due to $\{i,j\}$ is drawn from an exponential distribution with mean $\gamma^{1,1}_{i,j}$. For states where $s(i)=0$ and $s(j)=1$ or vice-versa, the loss is generated from an exponential distribution with mean $\gamma^{1,0}_{i,j}$. Finally, for states where both $s(i)=0$ and $s(j)=0$, the loss is generated from an exponential distribution with mean $\lambda \gamma^{1,0}_{i,j}$. 

In general, computation of the optimal state in \eqref{eq:opt-state} requires time exponential in $k$. In our experiments we approximate the optimal state by a linear programming relaxation of the optimization in \eqref{eq:opt-state} and use the appropriately rounded linear programming relaxation solution as a proxy for the optimal state.  
\begin{figure}[h]
\centering
\subfloat{\includegraphics[width=4.5cm]{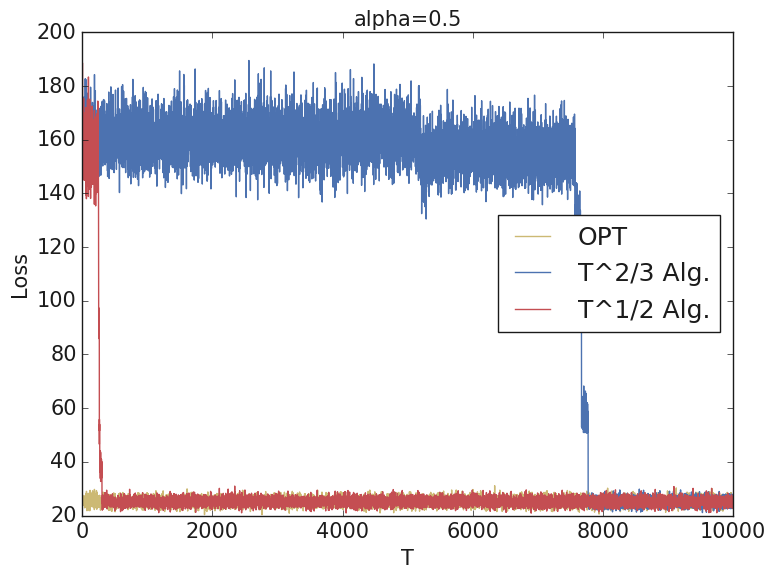}}\hfil
\subfloat{\includegraphics[width=4.5cm]{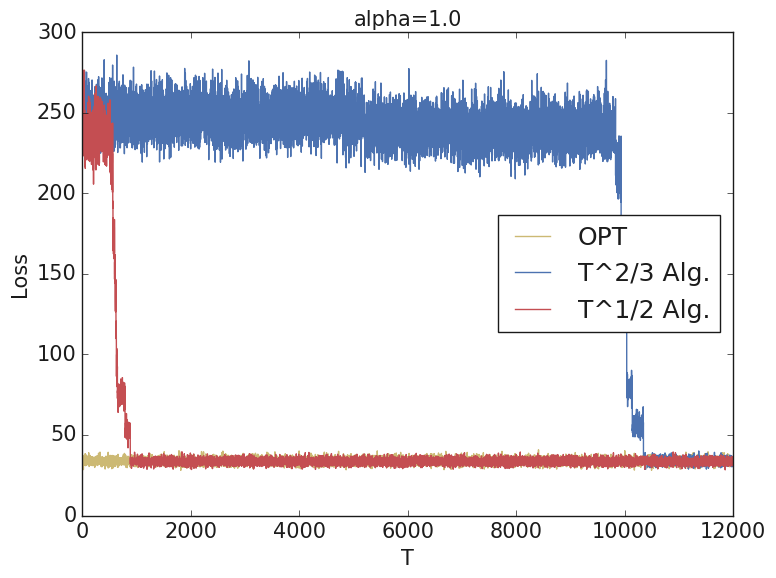}}\hfil 
\subfloat{\includegraphics[width=4.5cm]{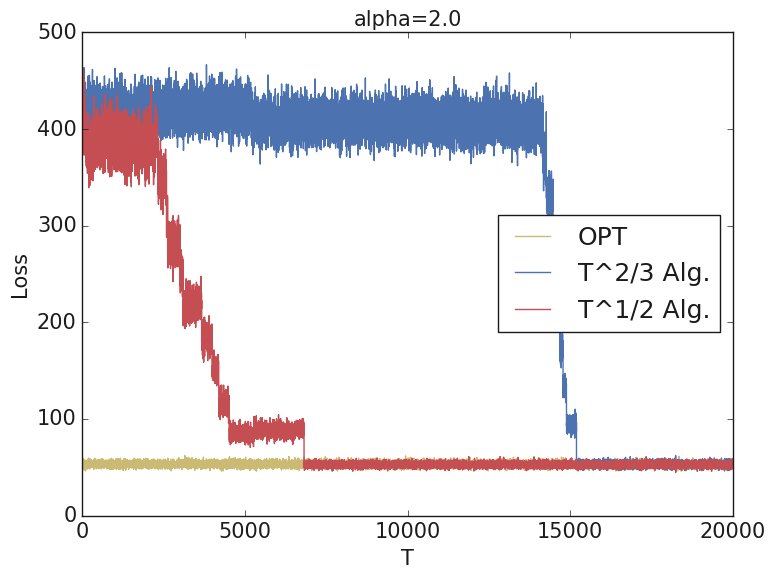}} 

\subfloat{\includegraphics[width=4.5cm]{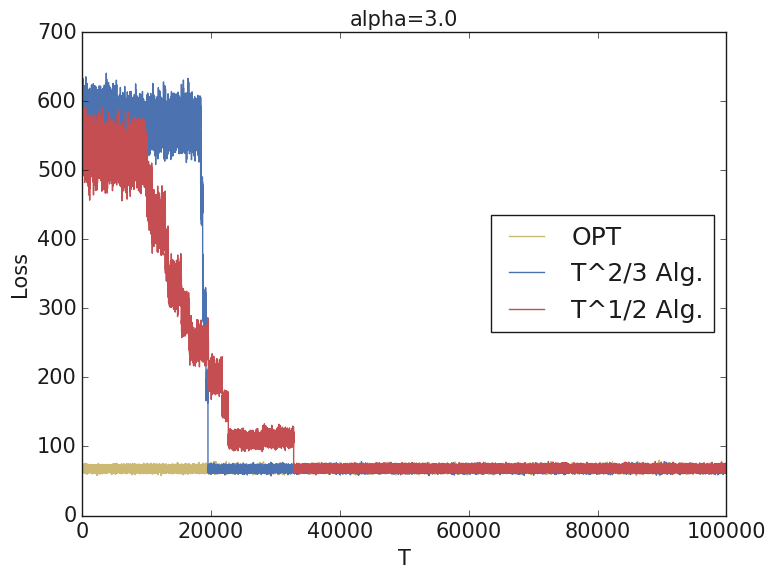}}\hfil   
\subfloat{\includegraphics[width=4.5cm]{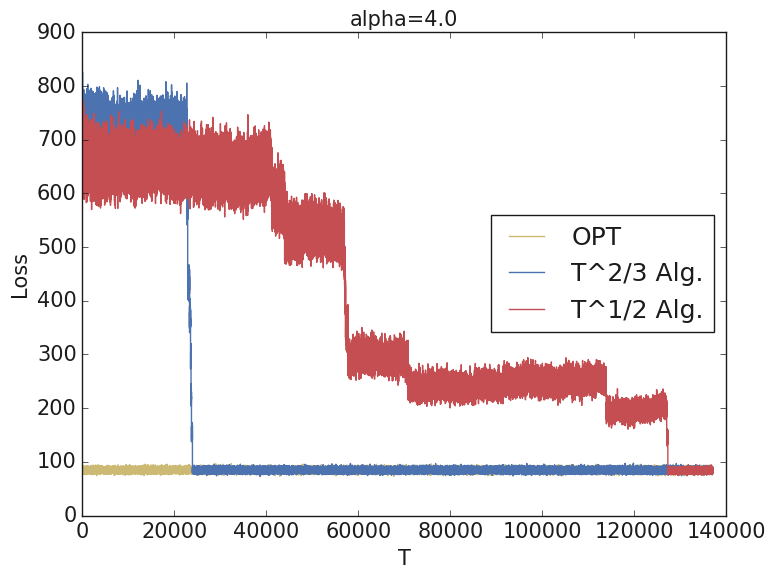}}\hfil
\subfloat{\includegraphics[width=4.5cm]{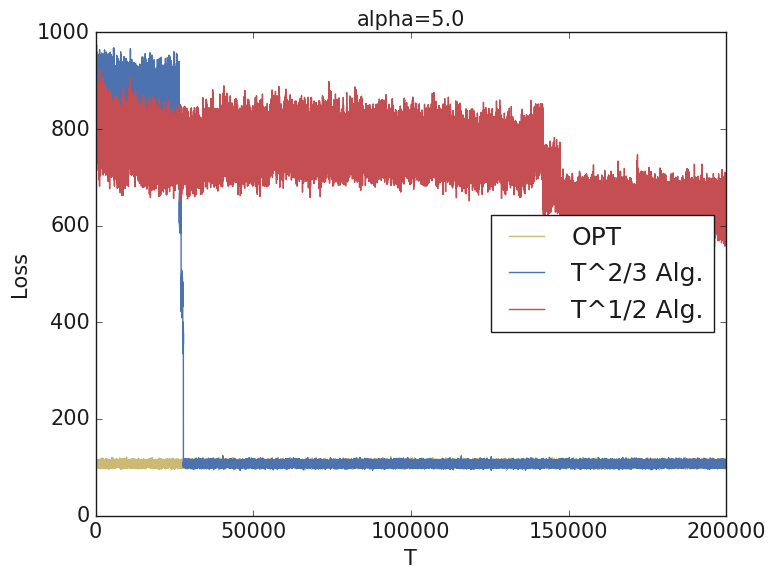}}
\caption{The figure shows the total accumulated loss incurred by the Algorithms in Figure~\ref{ALG:stochastic-mdp} and Figure~\ref{ALG:stochastic-mdp-m=higher--ucb} on a graph with $k=50$ criteria. The parameter $\alpha$ controls the total number of correlation sets. For each value of $\alpha$, we add $\alpha k$ random pairs of vertices into correlation sets. \label{fig:reg_vs_alpha}}
\end{figure}

\begin{figure}[h]
\centering
\subfloat{\includegraphics[width=4.5cm]{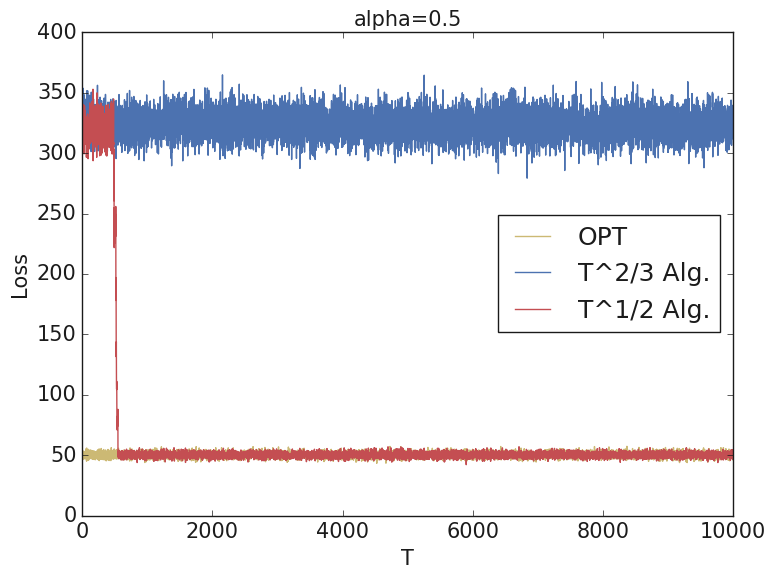}}\hfil
\subfloat{\includegraphics[width=4.5cm]{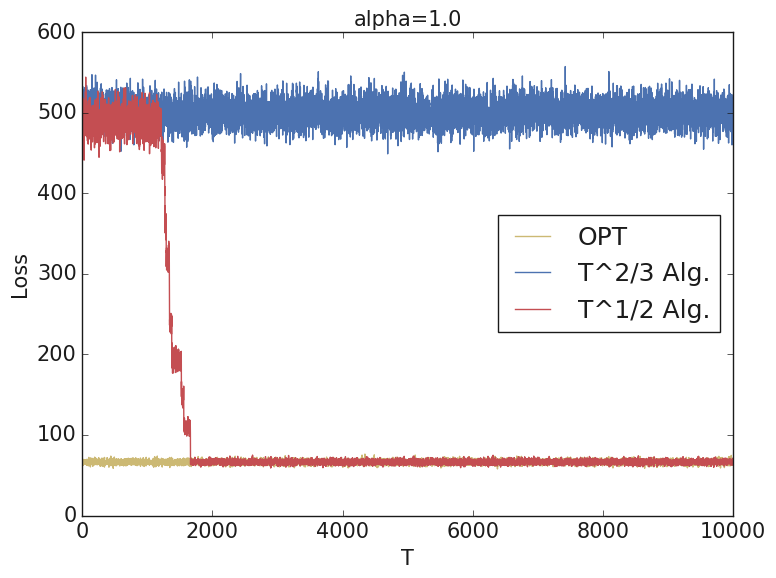}}\hfil 
\subfloat{\includegraphics[width=4.5cm]{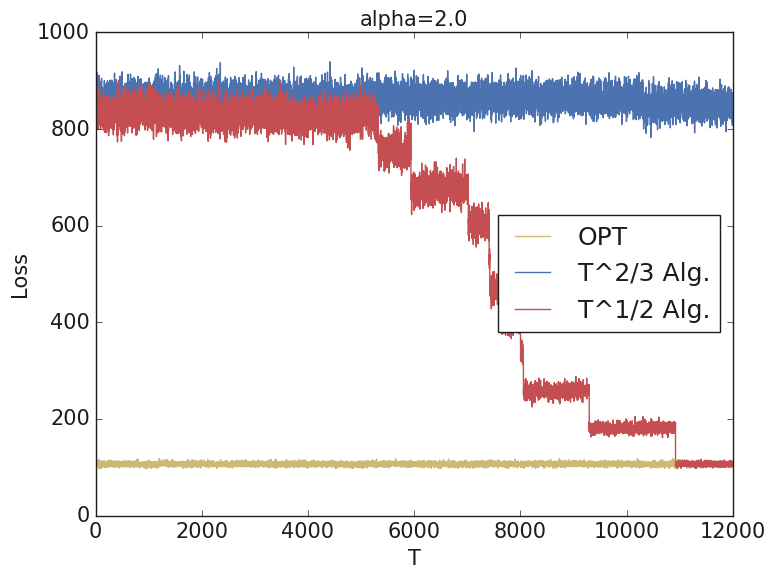}} 

\subfloat{\includegraphics[width=4.5cm]{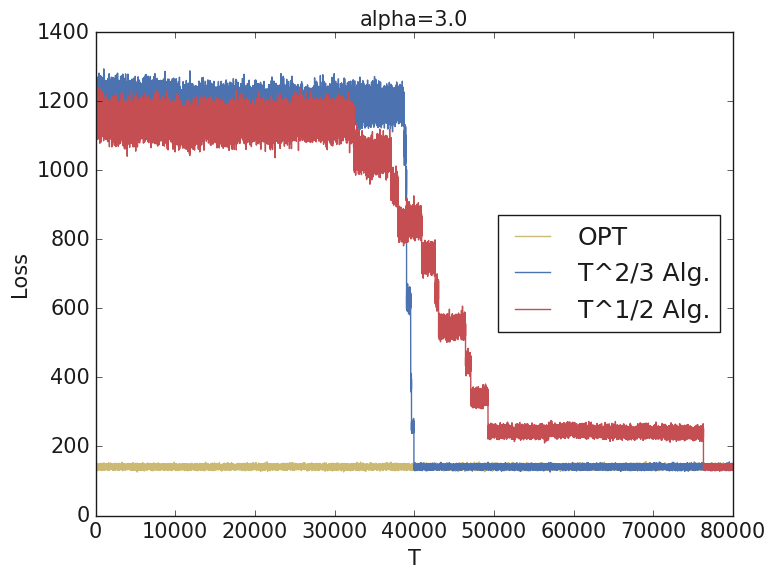}}\hfil   
\subfloat{\includegraphics[width=4.5cm]{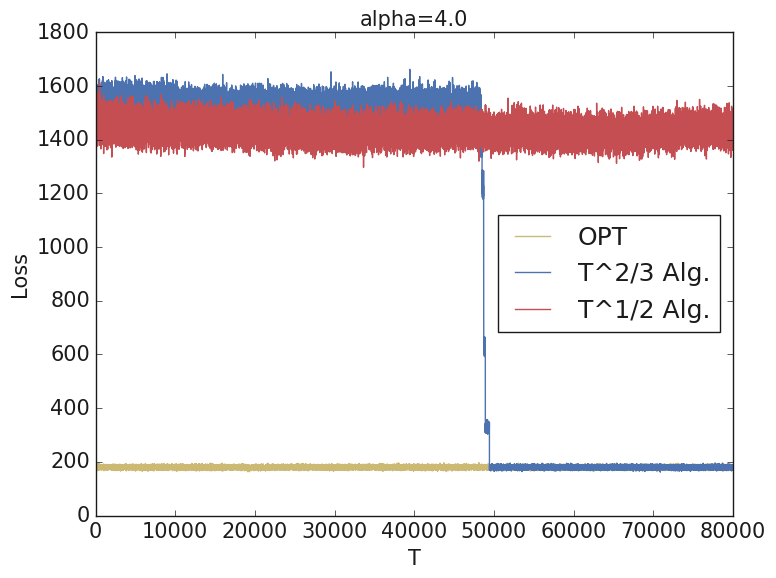}}\hfil
\subfloat{\includegraphics[width=4.5cm]{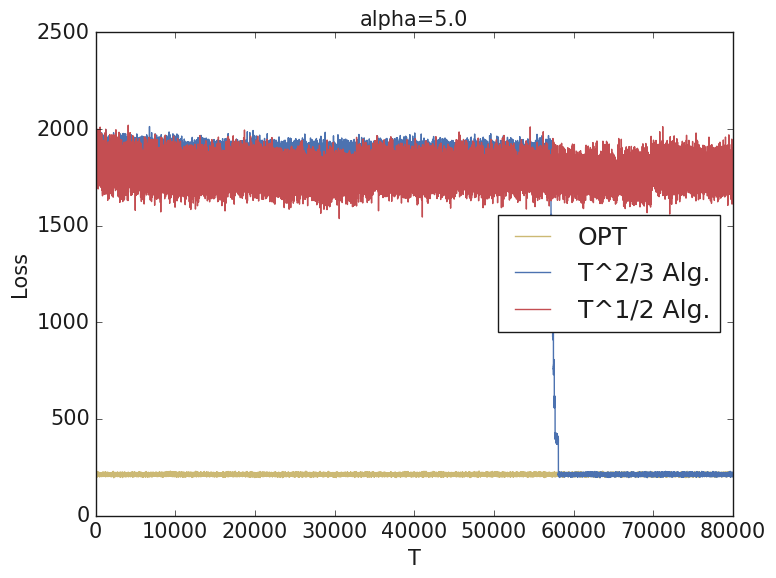}}
\caption{The figure shows the total accumulated loss incurred by the Algorithms in Figure~\ref{ALG:stochastic-mdp} and Figure~\ref{ALG:stochastic-mdp-m=higher--ucb} on a graph with $k=100$ criteria. The parameter $\alpha$ controls the total number of correlation sets. For each value of $\alpha$, we add $\alpha k$ random pairs of vertices into correlation sets. \label{fig:reg_vs_alpha_k_100}}
\end{figure}

For general $m$, our proposed algorithms in Figure~\ref{ALG:stochastic-mdp} and Figure~\ref{ALG:stochastic-mdp-m=higher--ucb} have complementary strengths. While the algorithm in Figure~\ref{ALG:stochastic-mdp} incurs a higher regret as a function of the number of time steps $T$, its running time has a polynomial dependence on the parameter $\alpha$, i.e., the number of correlation sets that a vertex participates in, on average. The algorithm in Figure~\ref{ALG:stochastic-mdp-m=higher--ucb} incurs a smaller regret of $\tilde{O}(\sqrt{T})$ as a function of $T$ at the expense of an exponential dependence on $\alpha$. In Figures \ref{fig:reg_vs_alpha} and \ref{fig:reg_vs_alpha_k_100} we empirically demonstrate this behavior where for small values of $\alpha$, the $\tilde{O}(\sqrt{T})$-regret algorithm is much better, whereas for higher values of $\alpha$ the $\tilde{O}(T^{2/3})$-regret algorithm is more desirable.

For the case of $m=1$ however, i.e., singleton correlation sets, the algorithm in Figure~\ref{ALG:stochastic-mdp-m=higher--ucb} achieves a smaller regret and runs in polynomial time and hence is expected to outperform the explore-exploit based algorithm from Figure~\ref{ALG:stochastic-mdp}. As can be seen from Figure~\ref{fig:reg_m_1} this is indeed the case and the $\tilde{O}(\sqrt{T})$ regret algorithm significantly outperforms the $\tilde{O}(T^{2/3})$ regret algorithm.

\begin{figure}[t]
\centering
\subfloat{\includegraphics[width=4.5cm]{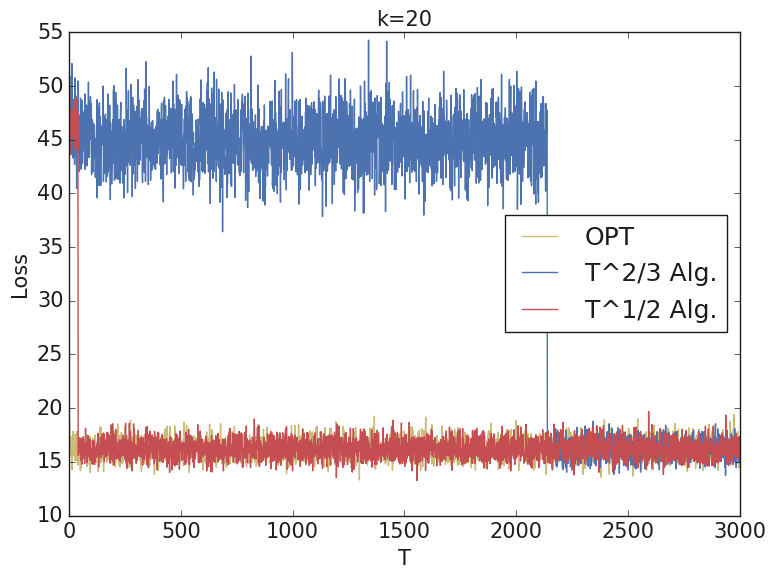}}\hfil
\subfloat{\includegraphics[width=4.5cm]{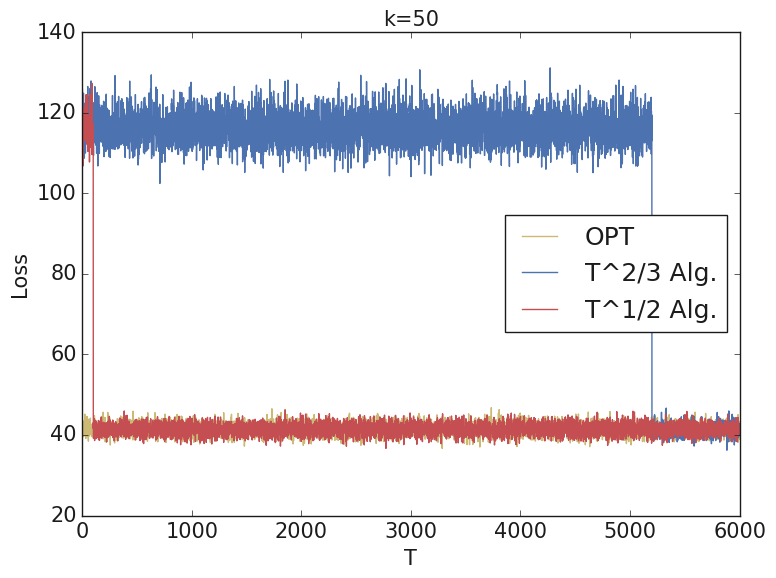}}

\subfloat{\includegraphics[width=4.5cm]{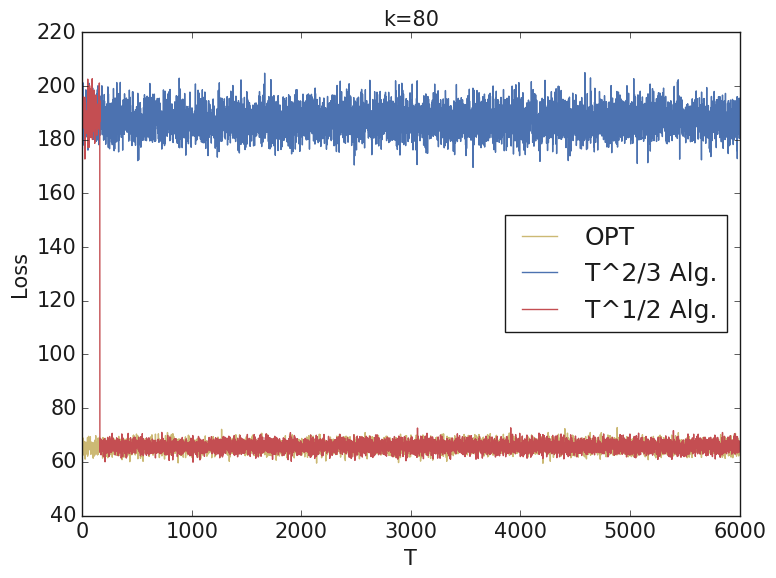}}\hfil   
\subfloat{\includegraphics[width=4.5cm]{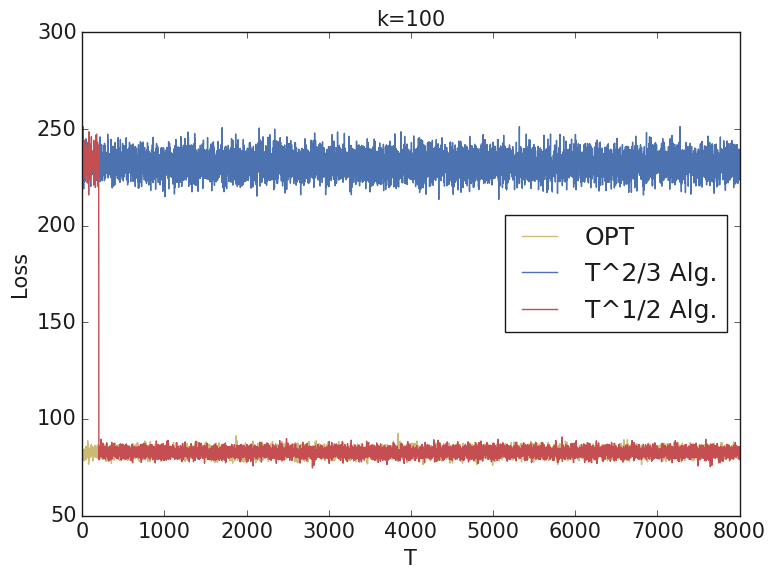}}\caption{The figure shows the total accumulated loss incurred by the Algorithms in Figure~\ref{ALG:stochastic-mdp} and Figure~\ref{ALG:stochastic-mdp-m=1-ucb-main} for the case of $m=1$ and varying graph sizes. \label{fig:reg_m_1}}
\end{figure}

\ignore{
Figure \ref{fig:loss_vs_k_alg_1}\footnote{{\tt YM: Do you. mean figure 7?}} shows the performance of our Algorithm from Figure \ref{ALG:stochastic-mdp} as a function of the graph size $k$ both in the sparse and the dense regime. In each case the algorithm spends an initial exploration phase estimating loss in the cover $\sK$ before converging to the optimal state. As $k$ grows the algorithm spends more time and hence suffers a higher loss during the exploration phase. Furthermore, in the dense regime, the size of the cover is $O(k^2)$ as opposed to $O(k)$ in the sparse regime. As a result the loss suffered during the exploration phase in the dense regime is much higher and the algorithm takes a longer time to converge.\footnote{{\tt YM: I am very confused what the reader should take from the graphs. They simply show that the regret in exploration phase is much higher than the exploitation phase, and the length of the exploration varies (in a very predictable way, the parameter $N$)}}
\begin{figure}
    \centering
    \begin{minipage}{0.45\textwidth}
        \centering
        \includegraphics[width=1\textwidth]{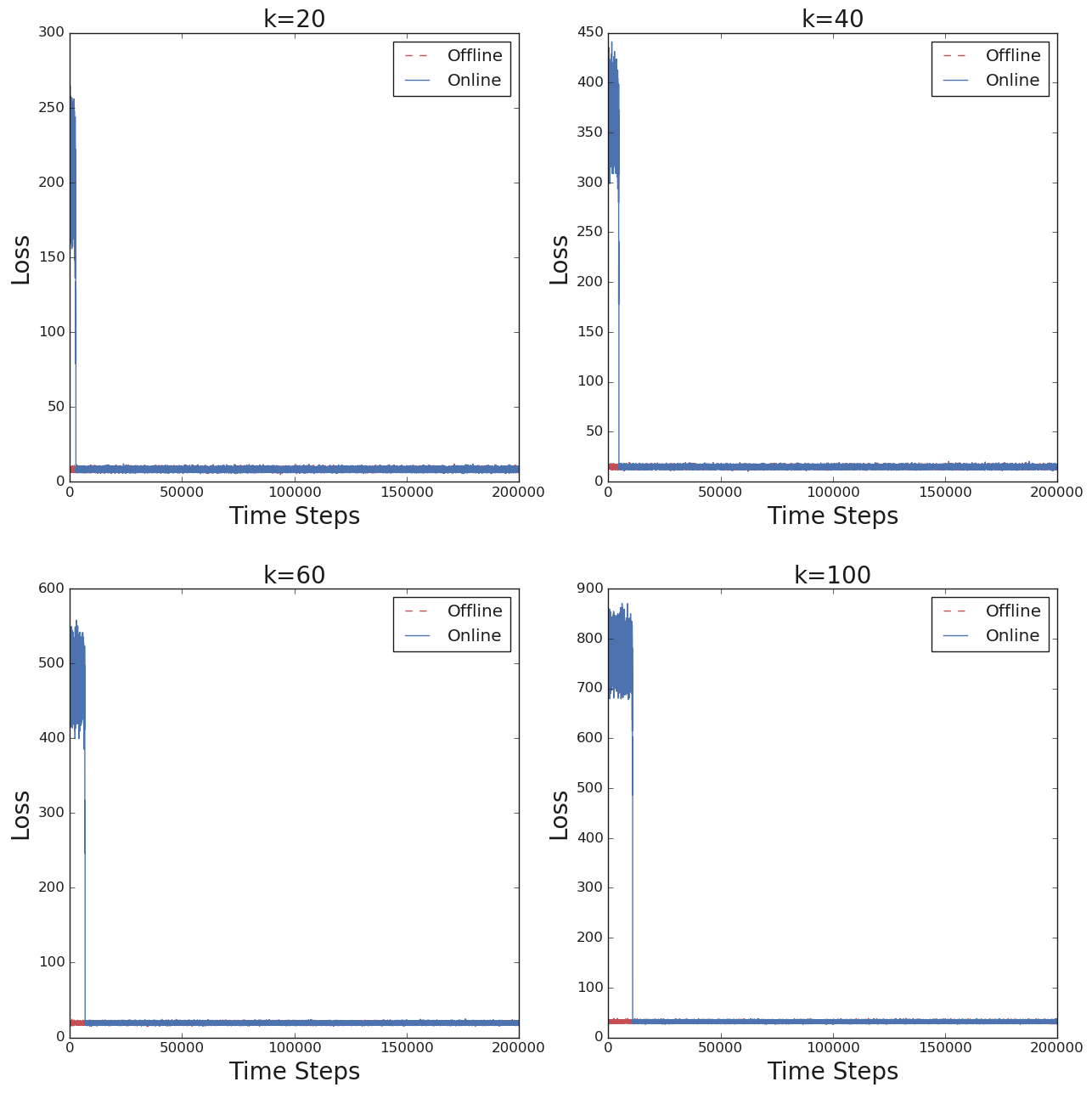} 
    \end{minipage}
    \begin{minipage}{0.45\textwidth}
        \centering
        \includegraphics[width=1\textwidth]{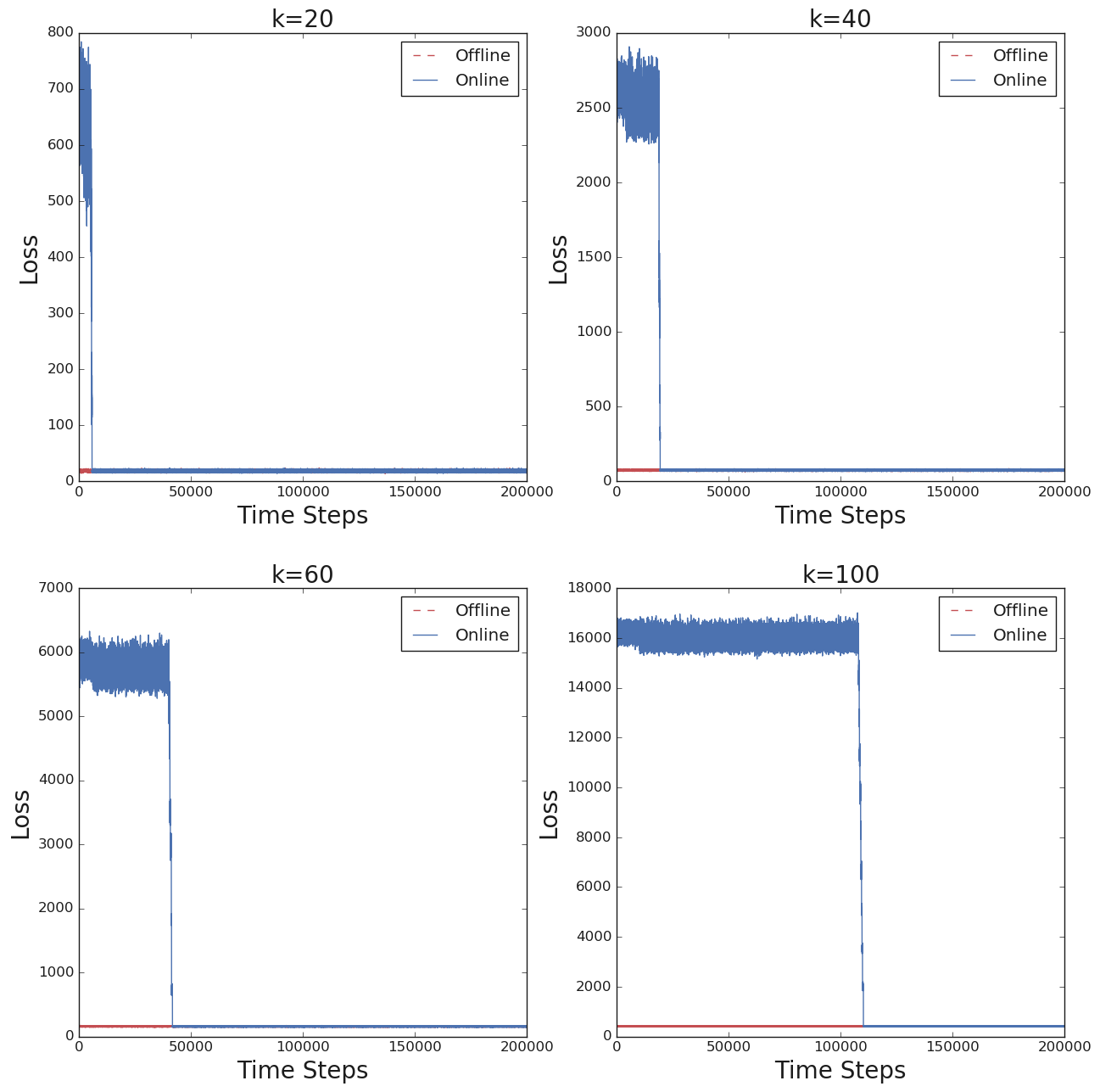} 
    \end{minipage}
   \caption{\label{fig:loss_vs_k_alg_1}
The left plot shows the performance of the $\tilde{O}(T^{2/3})$-regret algorithm from Figure \ref{ALG:stochastic-mdp}, for varying graph sizes and the sparse regime, i.e., $0.2 k$ correlation sets of size two. The right plot shows the performance of the algorithm in the dense regime with $0.2 \binom{k}{2}$ correlation sets of size two. In each case the value of $\lambda$ is set to be $10$.} 
\end{figure}
Figure \ref{fig:loss_vs_lambda_alg_1}\footnote{{\tt YM: Do you mean figure 8?}} plots the regret of the algorithm as a function of the parameter $\lambda$. As can be seen, the regret grows linearly with $\lambda$ both in the sparse and the dense regime.\footnote{{\tt YM: $\lambda$ is simply a scale factor for the losses, so it should be linear. What are we trying to sho in this experiment?}}
\begin{figure}[h]
    \centering
    \begin{minipage}{0.3\textwidth}
        \centering
        \includegraphics[width=1\textwidth]{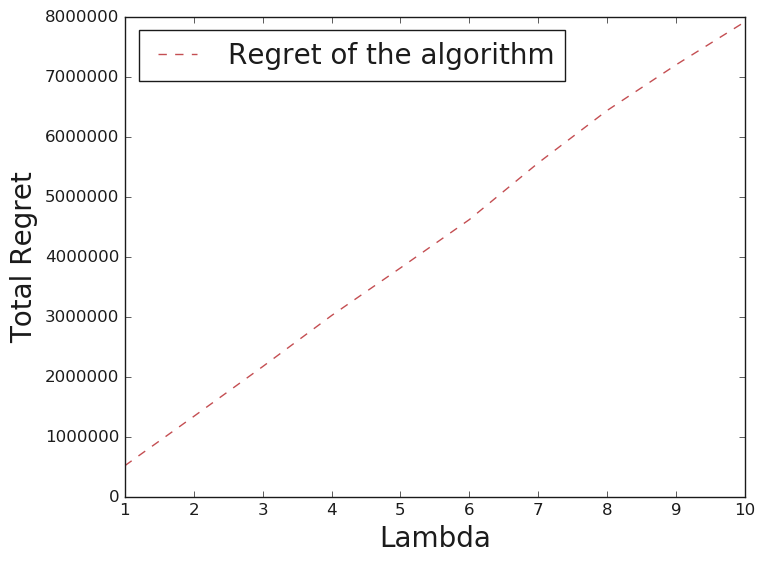} 
    \end{minipage}
    \begin{minipage}{0.3\textwidth}
        \centering
        \includegraphics[width=1\textwidth]{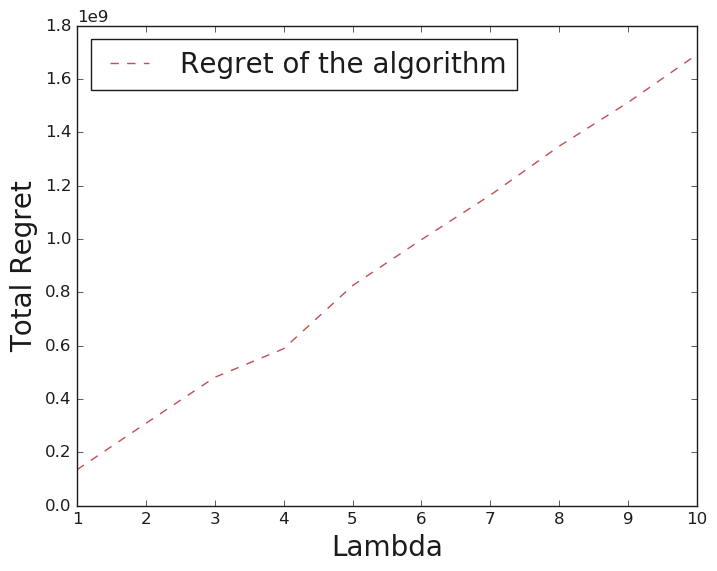} 
    \end{minipage}
   \caption{\label{fig:loss_vs_lambda_alg_1}
The left plot shows the growth of the regret of the $\tilde{O}(T^{2/3})$-regret algorithm from Figure \ref{ALG:stochastic-mdp}, varying $\lambda$ and the sparse regime, i.e., $0.2 k$ correlation sets of size two. The right plot shows the performance of the algorithm in the dense regime with $0.2 \binom{k}{2}$ correlation sets of size two. In each case the number of vertices in the graph $\cG$ is set to be $100$.} 
\end{figure}

Next, we compare the performance of our $\tilde{O}(\sqrt{T})$-regret algorithm from Figure~\ref{ALG:stochastic-mdp-m=1-ucb}. In order to do this we consider correlation sets of size one and use the same setting for generating model parameters as described in the beginning of the section. Figure \ref{fig:loss_vs_k_alg_2}\footnote{{\tt YM: Figure 9}} shows the performance of the $\tilde{O}(\sqrt{T})$ algorithm. As can be seen, when correlation sets are of size one, the $\tilde{O}(\sqrt{T})$ algorithm from Figure \ref{ALG:stochastic-mdp-m=1-ucb} significantly outperforms the algorithm from Figure \ref{ALG:stochastic-mdp}.\footnote{{\tt YM: Seem that the $\sqrt{T}$ does only a single update of the state and jumps to the optimal state and stays there. No too informative.}}
\begin{figure}[H]
    \centering
        \includegraphics[width=0.6\textwidth]{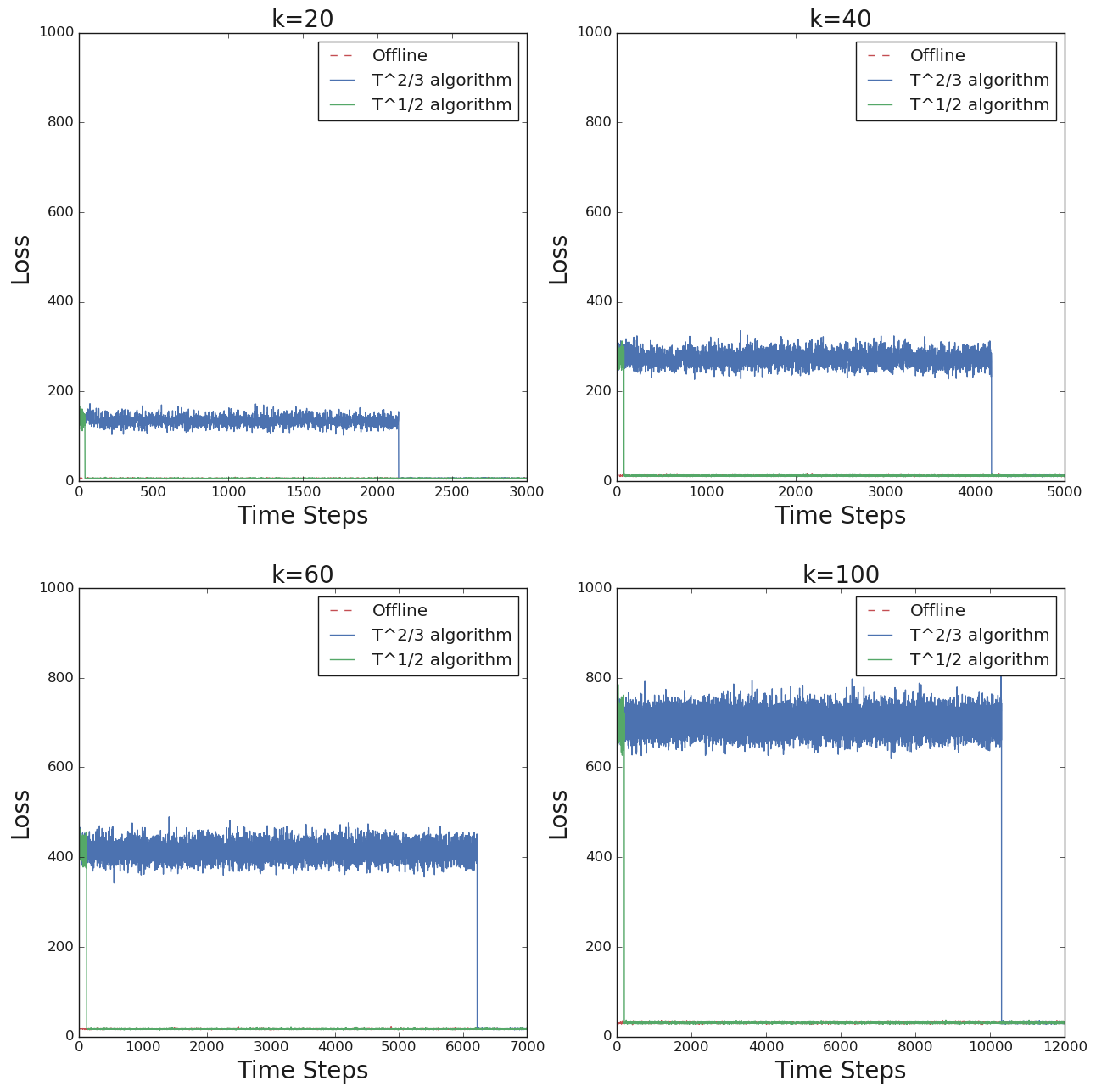} 
    \caption{\label{fig:loss_vs_k_alg_2}
The performance of the $\tilde{O}(\sqrt{T})$-regret algorithm from Figure \ref{ALG:stochastic-mdp-m=1-ucb} as compared to the $\tilde{O}(T^{2/3})$-regret algorithm from Figure \ref{ALG:stochastic-mdp} for varying graph size and correlation sets of size one. In each case the value of $\lambda$ is set to be $10$.} 
\end{figure}
}

\section{Related Work}
\label{sec:app-related}

Much of the current research on algorithmic fairness has focused on
designing solutions tailored to specific criteria such as
\emph{equalized odds} \citep{hardt2016equality}, \emph{counterfactual}
or \emph{demographic parity} \citep{KusnerLoftusRussellSilva17},
\emph{calibration} \citep{guo2017calibration}, or criteria for
individual fairness \citep{zemel2013}.

There has been a series of recent work on optimizing multiple fairness
constraints via constrained non-convex optimization. These
publications either reduce the problem to that of cost-sensitive
classification \citep{agarwal2018reductions, dwork2018decoupled} or
replace the non-convex constraints by convex proxies and next
optimize them via external or swap regret minimization algorithms
\citep{cotter2018two, cotter2018training}.

Classification has been the main focus of the research on algorithmic fairness.  There has also been work, however, studying fairness
criteria and algorithms for ranking \citep{celis_fair_ranking,
  beutel2019fairness, narasimhan2019pairwise} and clustering problems
\citep{fair_clustering_Nips2017, schmidt2018fair, backurs2019scalable}
with requirement of a proportional representation of populations
within each cluster. There have been also studies of the problem of fair
selection \citep{KearnsRothWu2017,KleinbergRaghavan2018}, online
learning and multi-armed
bandit problems \citep{LiuRadanovicDimitrakakisMandalParkes2017,
  JosephKearnsMorgensternNeelRoth2018}, 
in particular stochastic and
contextual bandits 
\citep{joseph2016fairness, gillen2018online,
  gupta2019individual} and
reinforcement learning
\citep{jabbari2017fairness,doroudi2017importance, wen2019fairness,
  ZhangShah2014}, human-classifier hybrid decision systems
\citep{MadrasPitassiZemel2018}, or fair personalization
\citep{CelisVishnoi2017}.

There have also been studies of the inherent tension between
satisfying multiple fairness metrics and composition of fair
classifiers. \cite{kleinberg2017} and \cite{feller2016computer}
demonstrate that it is impossible to satisfy equal opportunity and
calibration at the same time. \cite{blum2018preserving,
  dwork2018decoupled, dwork2020individual} study whether a composition
of fair classifiers remains fair and provided negative
results. \cite{menon2018cost} study the accuracy vs. fairness tradeoff
in classification problems with group fairness constraints.

\cite{joseph2016fairness} study a notion of individual
fairness in the context of stochastic bandits. There are $k$ arms and
in each time step, the algorithm picks an arm according to
distribution $p_t$. For arm $i_t$ a stochastic reward is revealed with
mean vector $\mu_i$. Furthermore, the authors impose the fairness
constraint that at each time step $t$, the algorithm can pick an arm
$i$ with higher probability than arm $j$ only if
$\mu_i > \mu_j$ (with high probability).

\cite{gillen2018online} extend the stochastic bandits setting to the
case of linear contextual bandits. Here, $k$ adversarially chosen
context vectors $x^t_1, \dots, x^t_k$ arrive at time $t$ and the mean
reward of playing arm $i$ at time $t$ is $\theta \cdot x^t_i$ for an
unknown vector $\theta$. The notion of individual fairness considered
is that if two contexts $x^t_i$ and $x^t_j$ are close to each other,
then the corresponding arms should be played with similar
probabilities. More specifically, it must holds that
$|p_t(i) - p_t(j)| \leq d(x^t_i, x^t_j)$. The distance metric $d$ is
assumed to be a Mahalanobis distance with an unknown matrix $A$ and
after each round it is revealed which individual fairness criteria
were violated.

While we avoid using special structures of specific fairness metrics
in our algorithm design, several of the studies mentioned before can be
cast into our framework. There are, however, several important
differences that make the algorithmic guarantees incomparable as these
studies aim to design algorithms tailored to a specific metric. For
instance, to model the stochastic bandit setting of
\cite{joseph2016fairness} as described above, we could consider a
graph $\sG$ with $\binom{k}{2}$ vertices, one for each pair of
arms. Each vertex captures the individual fairness metric associated
with the pair. More importantly, the graph will have no edges. Hence,
the states in the MDP can be described by a vector in
$\set{0, 1}^{\binom{k}{2}}$ describing which pairs of arms violate
individual fairness. The actions will consist of picking a probability
vector $p_t$ over the $k$ arms in each time step. However, in the
setting of \cite{joseph2016fairness}, the transition dynamics are
unknown since fairness violations as a result of taking an action
$p_t$ depend on the true unknown means of the arms. Since the work of
\cite{joseph2016fairness} studies a particular fairness metric, one
can use special structure of the problem to design a customized
solution. In our model, we study arbitrary fairness metrics and thus,
some assumptions about the transition dynamics are necessary.

\ignore{ 

  Again in our model the above setting would correspond to
  ${k \choose 2}$ vertices in the graph with no conflicting
  edges.\footnote{{\tt YM: we have $k^2$ vertices per time step, since
      the $x_i^t$ change over time. }} The state transitions are now
  governed by both the probability vector $p_t$ and the context
  vectors $x^t_1, \dots, x^t_k$. In this case the oracle to compute a
  probability vector $p_t$ that takes us to a particular state would
  correspond to a constrained loss minimization with constraints of
  the type: $|p_t(i) - p_t(j)| \leq d(x^t_i, x^t_j)$. The main
  difference is that in the setting of \cite{gillen2018online} the
  loss of each state is fixed. If $r$ criteria are violated the
  fairness loss is exactly $r$. Hence the goal is to navigate the
  states to quickly learn the unknown distance matrix $A$ and minimize
  the fairness loss. We on the other hand navigate the state space to
  learn the loss distribution.\footnote{{\tt YM: I am confused what it
      says that we are doing}}

}
    
Recent works have also studied the long-term impact of optimizing
fairness criteria in settings with feedback mechanisms
\citep{liu2018delayed, hashimoto2018fairness, mouzannar2019fair,
  kannan2019downstream}. \cite{liu2018delayed} show that,
in certain situations, constrained loss minimization to
equalize certain fairness criteria could lead to further disparate
impact in the long run. \cite{hashimoto2018fairness}
proposed algorithms for minimizing such disparate impact in settings
involving repeated loss minimization. More recently,
\cite{jabbari2017fairness, wen2019fairness} study the problem of
satisfying fairness constraints in reinforcement learning settings
involving a Markov Decision Process. The authors in
\cite{jabbari2017fairness} consider learning in an MDP where the
fairness criteria requires that the algorithm never takes an action $a$
over action $a'$ if the long-term reward is higher. It is clear to see
that the optimal policy for the MDP indeed satisfies this
property. Hence, there does exist a fair policy. However, the authors
show that finding a near optimal fair policy requires time exponential
in the size of the state space.

\cite{wen2019fairness} consider other fairness metrics such as
demographic parity in the context of learning in MDPs.
\cite{doroudi2017importance} show that existing importance sampling
methods for off-policy policy selection in reinforcement learning can
lead to unfair outcomes and present algorithms to mitigate this
effect. \cite{ZhangShah2014} define a fairness solution criterion for
multi-agent MDPs. The recent work of \citet{mladenov2020optimizing} studies optimzing long term social welfare in recommender systems.

While our work also involves learning in a Markov Decision Process
(MDP) and optimizing fairness in the long term, the setup and the motivation are different. Unlike all the
previous work mentioned, we do not commit to a fixed definition of
fairness and allow for arbitrary fairness criteria. Hence, states in
our MDP correspond to the current configurations of different fairness
criteria. Rather than studying each fairness metric in isolation, the
objective of our work is to propose a data-driven model that can learn
from feedback, a near-optimal configuration of the metrics to impose
on the system. To the best of our knowledge, ours is the first work to
incorporate optimizing fairness metrics of arbitrary types in an
online setting. In this context, the recent work of
\cite{kearns2019average} studies a specific combination of group and
individual fairness metrics. The authors consider a setting where
there is a distribution over individuals as well as a distribution
over classification tasks. They consider algorithms for achieving
\emph{average} individual fairness, that is in expectation over
classification tasks, the performance of the algorithm on a group
fairness metric such as demographic parity should be the same for each
individual.

An important aspect of our stochastic MDP-based model requires the
ability to observe the losses associated with different fairness
criteria at each time.  This relates to the problem of evaluating
(un)fairness according to different metrics from data.  Many such
metrics require access to both labeled data and to sensitive
attribute information such as race or gender, for accurate
evaluation. A recent line of work has studied this estimation problem
when one has limited and/or noisy access to sensitive attribute
information \citep{gupta2018proxy, coston2019fair, lamy2019noise,  wang2020robust}.

\section{Conclusion}

We presented a new data-driven model of fairness in the presence of
multiple criteria, with algorithms benefitting from theoretical
guarantees both in the stochastic and in the adversarial setting.
While we believe that our model can be realized in practice and while
our experiments show the effectiveness of our algorithms empirically,
several extensions are worth exploring to further expand their
applicability. These include fixing costs that can vary with time to
capture the varying algorithmic price and human effort required to
address various fairness criteria. Similarly, the expected losses in
our stochastic model could be time-dependent to express the growing
cost of a fairness criterion not being addressed.

\bibliographystyle{abbrvnat}
\bibliography{fair}

\newpage
\appendix

\section{Stochastic Setting}
\label{sec:app-stochastic}

We first show that in the stochastic model, if correlation sets are of size one then one can efficiently approximate the cost of the optimal state up to a factor of two.

\begin{theorem}
\label{thm:opt-state-for-m-equal-1}
If correlations sets are of size one ($m = 1$), then, for any
$\e, \delta > 0$, the true parameter vector for $\MDP$ can be
approximated to $\e$-accuracy in $\ell_\infty$-norm with probability
at least $1 - \delta$, in at most
$O(\frac{\B^2 k}{\epsilon^2} \log (\frac{k}{\delta}))$ time steps and
exploring at most $k+1$ specific states in $\sS$. Furthermore, given a
parameter vector $\bm{\theta}$, there is an algorithm that runs in
time polynomial in $k$ and finds an approximately optimal state $s'$
such that $g(s') \leq 2 \min_{s \in \sS} g(s)$.
\end{theorem}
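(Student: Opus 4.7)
The plan is to prove the two claims separately: the estimation bound via Hoeffding concentration on a carefully chosen set of $k+1$ exploration states, and the $2$-approximation via a reduction to minimum-weight vertex cover.

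\textbf{Parameter estimation.} When $m=1$, the loss $\ell_i^s$ depends on $s$ only through the bit $s(i)$, so the entire parameter vector reduces to $2k$ numbers $\gamma_i^b := \mu_i^s$ for any valid state $s$ with $s(i)=b$. I would estimate all of them by visiting only $k+1$ states: the all-zero state $s_0 = (0,\ldots,0)$, and for each $i \in [k]$ the singleton state $e_i$ in which only $v_i$ is fixed (each is a valid independent set in $\sG$, and any two can be reached from one another in at most $2$ transitions). Playing the null action for $N$ rounds at $s_0$ yields simultaneous samples of $\gamma_i^0$ for every $i$; the same procedure at $e_i$ yields samples of $\gamma_i^1$. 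A standard Hoeffding bound gives $|\hat\gamma_i^b - \gamma_i^b| \leq B\sqrt{\log(4k/\delta)/(2N)}$ with probability at least $1 - \delta/(2k)$ per parameter; a union bound over the $2k$ parameters combined with $N = O(B^2 \epsilon^{-2} \log(k/\delta))$ yields the $\ell_\infty$ guarantee, for a total of $(k+1)N = O(B^2 k \epsilon^{-2} \log(k/\delta))$ sampling steps plus at most $O(k)$ transitions.

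\textbf{Reduction to weighted vertex cover.} For the approximation claim, let $w_i := \gamma_i^0 - \gamma_i^1$ and $U(s) := \{i : s(i) = 0\}$. A state $s$ is valid iff its fixed bits form an independent set of $\sG$, equivalently iff $U(s)$ is a vertex cover of $\sG$. Rewriting the objective,
\begin{equation*}
g(s) \;=\; \sum_{i=1}^k \gamma_i^{s(i)} \;=\; \sum_{i=1}^k \gamma_i^1 \;+\; \sum_{i \in U(s)} w_i,
\end{equation*}
the task reduces to minimizing $\sum_{i \in U} w_i$ over vertex covers $U$. Partition vertices into $U_0 = \{i : w_i \leq 0\}$ and its complement. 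Any optimal $U^*$ may be assumed to contain $U_0$, since adding such vertices never increases the objective, so after deleting $U_0$ and the edges it already covers the remaining subproblem is minimum-weight vertex cover on the residual graph $\sG'$ with strictly positive weights. This is NP-hard but admits a classical polynomial-time $2$-approximation via LP rounding or the primal-dual method.

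\textbf{Lifting to a $2$-approximation of $g$.} Let $\widehat{VC}$ be a $2$-approximate minimum-weight vertex cover of $\sG'$ and output $\hat s$ with $U(\hat s) = U_0 \cup \widehat{VC}$. Set $M = \sum_{i \in U^* \setminus U_0} w_i$ (nonnegative) and $A = \sum_i \gamma_i^1 + \sum_{i \in U_0} w_i$, so that $g(s^*) = A + M$ and $g(\hat s) \leq A + 2M$. The step I expect to require the most care is showing that the VC $2$-factor transfers cleanly to $g$: rearranging,
\begin{equation*}
A \;=\; \sum_{i \notin U_0} \gamma_i^1 \;+\; \sum_{i \in U_0} \gamma_i^0 \;\geq\; 0
\end{equation*}
because all losses lie in $[0,B]$ (so $\gamma_i^b \geq 0$), and therefore $g(\hat s) \leq A + 2M \leq 2(A + M) = 2 g(s^*)$. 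A naive direct VC approximation of $g$ would fail because of the constant term $\sum_i \gamma_i^1$; it is exactly the isolation of $U_0$ together with the non-negativity of $A$ that lets the classical factor-$2$ weighted-VC algorithm yield a factor-$2$ approximation of $g$ itself.
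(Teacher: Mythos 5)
Your proposal is correct and follows essentially the same route as the paper's proof: the estimation half is identical (Hoeffding plus a union bound over the all-zeros state and the $k$ indicator states), and the approximation half is the same weighted-vertex-cover reduction --- the paper writes the covering ILP $y_i + y_j \geq 1$ explicitly and rounds its LP relaxation at threshold $1/2$, using the non-negativity of the additive constant exactly as you use $A \geq 0$, and its ``any vertex with $\Lambda^{(1)}_i \leq \Lambda^{(2)}_i$ can be safely left unfixed'' step is your $U_0$. The only cosmetic difference is that the paper folds the fixing costs $c_i$ into the per-vertex cost of a fixed vertex, whereas you work directly with $g$ as defined in the theorem statement.
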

\begin{proof}
Notice that when correlation sets are of size one, the expected loss incurred for criterion $v_i$ at any given state $s$ solely depends on whether $s(i)=0$ or $s(i)=1$. Hence in this case the MDP consists of $2k$ parameters where we use $\gamma^1_{i}$ and $\gamma^0_i$ to denote the expected losses incurred by vertex $i$ when it is in fixed and unfixed position respectively. For any $\delta > 0$, by Hoeffding's inequality, we have that if we stay in state $s = (0,0, \dots, 0)$ for $N = \frac{B^2}{\epsilon^2} \log(2k/\delta)$ time steps then with probability at least $1-\frac{\delta}{2}$, we have each $\gamma^0_i$ estimated up to $\epsilon$ accuracy. Let $e_i \in \{0,1\}^k$ denote the indicator vector for $i$. If we stay in state $s = e_i$ for $\frac{B^2}{\epsilon^2} \log(2k/\delta)$ time steps, then with probability at least $1-\frac{\delta}{2}$ we have $\gamma^1_i$ estimated up to $\epsilon$ accuracy. Hence, overall after $O(\frac{B^2 k}{\epsilon^2} \log (\frac{k}{\delta}))$ time steps, we have each parameter estimated up to $\epsilon$ accuracy. Notice that in total we observe at most $k+1$ states.

Next we show how to efficiently approximate the loss of the best state. Given the parameters of the MDP each vertex has two costs $\Lambda^{(1)}_i = \gamma^0_i$, denoting the cost incurred if the vertex is unfixed and
$\Lambda^{(2)}_i = c_i + \gamma^1_i$, denoting the cost incurred if the vertex is fixed. Without loss of generality assume that $\Lambda^{(1)}_i > \Lambda^{(2)}_i$~(any
vertex that does not satisfy this can be safely left unfixed). For each $i$, define $y_i=1$ if vertex $i$ is
unfixed otherwise define $y_i=0$. Then the offline problem of finding the best state can be written as 
\begin{align*}
    \text{min} & \sum_{i=1}^k (1-y_i) \Lambda^2_i + y_i\Lambda^1_i = \sum_{i=1}^k y_i \gamma_i + \sum_{i=1}^k \Lambda^{(2)}_i\\
    \text{s.t. } & y_i \in \{0,1\}\\
    y_i + y_j &\geq 1, \,\, \forall (v_i,v_j) \in E.
\end{align*}
Here $\gamma_i = \Lambda^{(1)}_i - \Lambda^{(2)}_i > 0$. By relaxing $y_i$ to be in
$[0,1]$ and solving the corresponding linear programming relaxation, we get a solution $y^*_1,
y^*_2, \dots, y^*_k$. Let $\text{LPval}$ denote the linear programming objective value achieved by $y^*_1,
y^*_2, \dots, y^*_k$. Since the linear programming formulation is a valid relaxation of the problem of finding the best state, we have $\text{LPval} \leq \min_{s \in \sS} g(s)$.

We output the state $s'$ in which a vertex $i$ if and only if $y^*_i <
1/2$. Let $S$ be the set of fixed vertices. We have
\begin{align*}
    g(s') &= \sum_{i \in S} \Lambda^{(2)}_i + \sum_{i \notin S} \Lambda^{(1)}_i\\
    &= \sum_{i=1}^k \Lambda^{(2)}_i + \sum_{i \notin S} (\Lambda^{(1)}_i - \Lambda^{(2)}_i)\\
    &= \sum_{i=1}^k \Lambda^{(2)}_i + \sum_{i \notin S} \gamma_i\\
    &< \sum_{i=1}^k \Lambda^{(2)}_i + 2\sum_{i \notin S} y^*_i\gamma_i\\
    &< 2 \Big(\sum_{i=1}^k \Lambda^{(2)}_i + \sum_{i=1}^k y^*_i\gamma_i \Big)\\
    &< 2 \cdot \text{LPval}\\
    &\leq \min_{s \in \sS} 2 g_{\bm{p}}(s).
\end{align*}
\end{proof}

\subsection{Case $m = 2$} 

To illustrate the ideas behind our general algorithm, we first consider
a simpler setting where correlation sets are defined on subsets of size
at most two. This setting also captures an important case where fixing
a particular criterion affects the rate of fairness complaints of its
neighbors.

Our algorithm consists of an exploration phase where it observes the
losses for a specific subset of at most $4n$ states. We will show that after
the exploration phase, the algorithm can accurately estimate the
expected loss for any other state $s \in \sS$. Notice that the number
of states in $\sS$ is in general exponential in $k$. Thus, the subset
of states to observe must be carefully chosen and must take into
account the structure of the graph $\sG$. After the exploration phase,
the algorithm creates an estimate $\hat{\bm{\theta}}$ of the true
parameter vector $\bm{\theta}$, uses the optimization oracle for
solving \eqref{eq:opt-state} to find a near optimal state $\hat{s}$
and selects to stay at state $\hat{s}$ for the remaining time steps.

Let $\cmax$ denote the maximum fixing cost:
$\cmax = \max_{i \in [k]} c_i$.  We will show that the pseudo-regret of our
algorithm is bounded by $O(k \log k (\cmax +\B)^{1/3} T^{2/3} \log (kT))$. We
first describe how we select the subset of states to observe in the
exploration phase.

We say that $(i, j, b)$ is a \emph{dichotomy} if for two criteria $i$
and $j$ and for $b \in \set{0, 1}$, there exist two states
$s, s' \in \sS$ such that: (1) $s(j) = 0$ and $s'(j) = 1$, and (2)
$s(i) = s'(i) = b$. Note that if an edge $(v_i, v_j)$ is present in
$\sG$, then $(i, j, 1)$ cannot be a dichotomy, since criteria $i$ and
$j$ cannot be fixed simultaneously.

\begin{definition}
\label{def:cover}
Consider a subset $\sK \subset \sS$. We will say that $\sK$ is a
\emph{cover} for $\cC$ if for any dichotomy $(i, j, b)$, where
$\set{i, j}$ is a correlation set ($\set{i, j} \in \cC$) there exist
two states $s, s' \in \sK$ such that:
\begin{enumerate}

\item[(1)] they agree in all criteria except criterion $j$: $s(l) = s'(l)$
for all $l \neq j$;

\item[(2)] criteria $i$ is in state $b$ in both: $s(i) = s'(i) = b$;

\item[(3)] we have that $s(j) = 0$ and $s'(j) = 1$. 

\end{enumerate}
We call such a pair $(s, s')$ an $(i, j, b)$-pair.
\end{definition}
Furthermore, for every singleton set $\{i\}$ in $\cC$, the cover $\sK$ contains states $s,s'$ such that $s(i)=0, s'(i)=1$ and $s(j) = s'(j)$ for all $j \neq i$. We can always find a cover $\sK$ of size at most $4n$ by picking for
each $\set{i, j} \in \cC$, at most four states corresponding to
different bit configurations for $i$ and $j$, with all other bits set
to zero. 
For any valid dichotomy $(i,j,b)$ we define $X^{i,j}_b$ as
\begin{align}
    \label{eq:def-X-m=2}
    X^{i,j}_b \coloneqq \mu_i^s - \mu_i^{s'},
\end{align}
where $s,s' \in \sK$ is an $(i,j,b)$ pair. If $\{i,j\} \notin \cC$ we define $X^{i,j}_b$ to be zero. Notice that the values $X^{i,j}_b$ can be approximated from estimating the loss values of states in the cover. Next, we state our key result showing that, given
the loss values for the states in a cover, we can accurately estimate
the loss values for any vertex in any other state. 

\begin{reptheorem}{thm:phi-cover-m-equal-2}
Let $\sK$ be a cover for $\calC$. 
Then, for any state $s \in \sS$ and any $i \in [k]$ with $s(i) = b$, we have:
\begin{align}
\label{eq:vertex-loss-comp-m-equal-2-main-app}
\mu_i^s  
& = \mu_i^{s''}  + \sum_{j = 1}^k 
X^{i,j}_{b} \left[ \1(s(j) = 1) \1(s''(j) = 0) - 
\1(s(j) = 0) \1(s''(j) = 1) \right],
\end{align}
where $s''$ is any state in $\sK$ with $s''(i) = b$.
\end{reptheorem}
\begin{proof}
Consider a correlation set $\{i,j\}$. The expected loss incurred by vertex $v_i$ or $v_j$ due to this set in any given state depends solely on the configuration of $v_i$ and $v_j$ in that state. Hence there are four parameters in the $\bm{\theta}$ vector corresponding to the correlation set $\{i,j\}$ and we denote them using $\gamma^{a,b}_{i,j}$, where $a,b \in \{0,1\}$.
Let $s, s'$ be an $(i,j,b)$ pair. When we switch from $s$ to $s'$ the only difference in the expected losses for vertex $i$ comes from the pair $(i,j)$. Hence we have
\begin{align*}
\mu^{s'}_i - \mu^s_i &= \gamma^{b,1}_{i,j} - \gamma^{b,0}_{i,j}
\coloneqq X^{i,j}_b.
\end{align*}
Hence, given the loss estimates for states in $\sK$ we can estimate $X^{i,j}_b$ for each $i,j \in [k]$ and $b \in \{0,1\}$. Next, given an arbitrary state $s$ with $s(i)=b$ let $s'' \in \sK$ such that $s''(i)=b$. We have
\begin{align*}
    \mu^s_i &= \mu^{s''}_i + \sum_{\substack{j: s(j)=0 \\ s''(j)=1}} (\gamma^{b,0}_{i,j} - \gamma^{b,1}_{i,j}) + \sum_{\substack{j: s(j)=1 \\ s''(j)=0}} (\gamma^{b,1}_{i,j} - \gamma^{b,0}_{i,j})\\
     &= \mu^{s''}_i + \sum_{\substack{j:s(j)=1, \\ s''(j)=0}} X^{i,j}_{b}-\sum_{\substack{j:s(j)=0, \\ s''(j)=1}} X^{i,j}_{b}\\
    &= \mu_i^{s''}  + \sum_{j = 1}^k 
X^{i,j}_{b} \left[ \1(s(j) = 1) \1(s''(j) = 0) - 
\1(s(j) = 0) \1(s''(j) = 1) \right].
\end{align*}
\end{proof}
Based on the above theorem we describe our online algorithm in
Figure~\ref{ALG:stochastic-mdp-app} (same as the algorithm in Figure~\ref{ALG:stochastic-mdp} of the main body) and the associated regret guarantee. 

\begin{figure}[t]
\begin{center}
\fbox{\parbox{1\textwidth}{
{\bf Input:} The graph $\sG$, correlation sets $\calC$, fixing costs $c_i$.
\begin{enumerate}   
\item Pick a cover $\sK =  \set{s_1, s_2, \dots, s_r}$ of $\cC$. 

\item Let $N=10 \frac{T^{2/3}( \log rkT)^{1/3}}{r^{2/3}}$.

\item For each state $s \in \sK$ do:

\begin{itemize}

\item Move from current state to $s$ in at most $k$ time steps.

\item Play action $a=0$ in state $s$ for the next $N$ time steps to
  obtain an estimate $\h \mu_i^s$ for all $i \in [k]$.

\end{itemize}

\item Using the estimated losses for the states in $\sK$ and
  Equation~\eqref{eq:vertex-loss-comp-m-equal-2-main-app}, 
  run the oracle for the optimization \eqref{eq:opt-state} to obtain an approximately optimal state $\hat{s}$.
\item Move from current state to $\hat{s}$ and play action $a=0$ from $\hat{s}$ for the remaining time steps.
\end{enumerate}
}}
\end{center}
\caption{Online algorithm for $m = 2$ achieving $\tilde{O}(T^{2/3})$ pseudo-regret.}
\label{ALG:stochastic-mdp-app} 
\end{figure}

\begin{reptheorem}{thm:regred-mdp-m-equal-2}
Consider an $\MDP$ with losses in $[0, \B]$, a maximum fixing cost
$\cmax$, and correlations sets of size at most $m = 2$.  Let $\sK$ be
a cover of $\cC$ of size $r \leq 4n$, then, the algorithm of
Figure~\ref{ALG:stochastic-mdp-app} (same as Figure~\ref{ALG:stochastic-mdp}) achieves a pseudo-regret bounded by
$O(k r^{1/3} (\cmax +\B) (\log rkT)^{1/3} T^{2/3})$. Furthermore,
given access to the optimization oracle for \eqref{eq:opt-state}, the
algorithm runs in time polynomial in $k$ and $n = |\cC|$.
\end{reptheorem}
\begin{proof}
In each time step the maximum loss incurred by any criterion is bounded by $c+B$. Let $\{s_1, s_2, \dots, s_r\}$ be the states in $\sK$. During the exploration phase the algorithm stays in each state for $N$ time steps and incurs a total loss bounded by $kNr (c+B)$. During the exploration phase the algorithm moves from one state to another in at most $k$ steps and incurs a total additional loss of at most $rk^2(c+B)$. At any given state $s \in \sK$ and vertex $v_i$, after $N$ time steps we will, with probability at least $1-\delta$, an estimate of $\mu^s_i$ up to an accuracy of $2B\sqrt{\frac{\log 1/\delta}{N}}$. Setting $\delta = 1/(rk T^4)$ and using union bound, we have that at the end of the exploration phase, with probability at least $1 - \frac{1}{T^4}$, the algorithm will have estimate $\hat{\mu}^s_i$ for all $s \in \sK$ and $i \in [k]$ such that
\begin{align}
\hat{\mu}^s_i - \mu^s_i \leq 4B\sqrt{\frac{\log rkT}{N}}.
\end{align}

Hence during the exploitation phase, with high probability, the algorithm will have the estimate for the expected loss of each state in $\sS$, i.e., $\sum_i \mu^s_i$ up to an error of $4kB \sqrt{\frac{\log rkT}{N}}$. Combining the above we get that the total pseudo-regret of the algorithm is bounded by
\begin{align*}
 \Reg(\cA) &\leq  kNr(c+B) + rk^2 (c+B) + \Big(1-\frac{1}{T^4} \Big)4kB T \sqrt{\frac{\log rkT}{N}} + \frac{1}{T^4}k(c + B)T.
\end{align*}
Setting $N=10 \frac{T^{2/3}( \log rkT)^{1/3}}{r^{2/3}}$ we get that 
$$
\Reg(\cA) \leq O(k r^{1/3} (c+B) (\log rkT)^{1/3} T^{2/3}).
$$
\end{proof}
\subsection{General case} 

The algorithm for the case of $m = 2$ naturally extends to arbitrary
correlation set sizes. 
Overall the structure of the algorithm remains the
same where we pick a cover of $\calC$ and estimate the losses incurred
in states that belong to the cover. Using the estimated losses we are able to approximately estimate the loss of any vertex at any other
state. In order to do this we extend the definition of the cover as
follows. Given correlation sets of arbitrary size in $\cC$, a vertex $v_i$ may participate in many of them. We say that vertices $v_i$ and $v_j$ share a correlation set, if they appear together in a set in $\cC$. Consider the set of indices of all the vertices that $v_i$ shares a correlation set with. We partition this set into disjoint subsets such that no two vertices in different subsets share a correlation set. For a given vertex $v_i$, we denote this collection of disjoint subsets by $I_i$. For example, if $\cC$ contains sets $\{1,2\}$, $\{1,3\}$, and $\{1,4\}$, then, $I_1$ consists of the set $\{2,3,4\}$. On the other hand if $\cC$ contains sets $\{1,2,3\}, \{1,3,4\},$ and $\{1,6,7\}$ then,  $I_1$ consists of sets $\{2,3,4\}$ and $\{6,7\}$. For a given state $s$ and $J \in I_i$ we denote by $s(J)$ the vector $s$ restricted to indices in $J$. Notice that, in the worst case, $I_i$ will consist of a single set of size at most $\min(k-1, nm)$. However, for more structured cases (e.g, $m=2$) we expect $I_i$ to consist of subsets of small sizes.


Given $i\in [k]$, $J \in I_i$, $b \in \{0,1\}$ and vectors ${u}_1, {u}_2$, we say
that $(i, b, J, {u}_1, {u}_2)$ is a dichotomy, if there exist two states
$s, s' \in \sS$ such that: (1) $s(J) = {u}_1, s'(J) = {u}_2$,
(2) $s(i) = b = s'(i)$, and (3) $s,s'$ agree in all other criteria. 
We call such a pair of states $s,s'$ an
$(i,b, J, u_1, u_2)$ pair. We next extend the definition of a cover as
follows. A subset $\sK \subseteq S$ is called a cover of $\calC$ if
for any valid dichotomy $(i, b, J, {u}_1, {u}_2)$, there exists an
$(i, b, J, {u}_1, {u}_2)$ pair $s,s' \in \sK$.
In general, we will always have a cover of size at most $n
2^{mn}$. Similar to \eqref{eq:def-X-m=2}, for a valid dichotomy $(i,b,J,u_1, u_2)$, we define $X^{i,u_1, u_2}_{b,J}$ as
\begin{align}
    \label{eq:def-x-general-m}
    X^{i,u_1, u_2}_{b,J} \coloneqq \mu^s_i - \mu^{s'}_{i},
\end{align}
where $s,s' \in \sK$ is an $(i,b,J,u_1, u_2)$ pair.
Given the loss values in the states present in $\sK$, we can
estimate the loss of any other state using
Theorem~\ref{thm:phi-cover-m-general} stated below. 
\begin{theorem}
\label{thm:phi-cover-m-general}
Let $\sK$ be a cover for $\calC$.
Then, for any state $s \in \sS$ and any $i \in [k]$ with $s(i) = b$, we have:
\begin{align}
\label{eq:vertex-loss-comp-m-general}
  \mu_i^s & = \mu_i^{s''} +  \sum_{J \in I_i} X^{i, s(J), s''(J)}_{b,J}
\end{align}
Here $s''$ is any state in $\sK$ with $s''(i)=b$. 
\end{theorem}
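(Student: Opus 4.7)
The plan is to extend the $m=2$ argument by first establishing an additive decomposition of $\mu_i^s$ over the blocks of $I_i$ and then reading off each block difference from the cover quantities $X^{i,\cdot,\cdot}_{b,J}$. By \eqref{eq:loss-definition-via-correlation}, $\mu_i^s = \sum_{\sC \in \cC:\, i \in \sC} \theta_\sC^s$, and each $\theta_\sC^s$ depends on $s$ only through the coordinates indexed by $\sC$. The key structural observation is that every correlation set $\sC$ containing $i$ lies inside $\{i\} \cup J$ for a \emph{unique} block $J \in I_i$: the vertices of $\sC \setminus \{i\}$ pairwise share the correlation set $\sC$, so by the defining property of the partition $I_i$ they all lie in the same block. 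Hence, for each $J \in I_i$ and each configuration $u$ of $J$ realizable together with $s(i) = b$, the partial contribution
\[
f_{i,J}(b, u) \;=\; \sum_{\substack{\sC \ni i \\ \sC \setminus \{i\} \subseteq J}} \theta_\sC^s \qquad \text{(any $s$ with $s(i) = b$, $s(J) = u$)}
\]
is well defined, and, absorbing the singleton contribution $\theta_{\{i\}}^s$ (if $\{i\} \in \cC$) into a constant $\phi_i(b)$ depending only on $b$, I obtain $\mu_i^s = \phi_i(b) + \sum_{J \in I_i} f_{i,J}(b, s(J))$ for every state $s$ with $s(i) = b$.

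Next I would show $X^{i, u_1, u_2}_{b, J} = f_{i, J}(b, u_1) - f_{i, J}(b, u_2)$; this simultaneously establishes that $X^{i, u_1, u_2}_{b, J}$ is independent of the particular witnessing pair in $\sK$ chosen by \eqref{eq:def-x-general-m}. By the cover definition, for any valid dichotomy $(i, b, J, u_1, u_2)$ there is an $(i, b, J, u_1, u_2)$-pair $(s, s') \in \sK$ with $s(i) = s'(i) = b$, $s(J) = u_1$, $s'(J) = u_2$, and agreement on all other coordinates. Applying the decomposition term by term, the $\phi_i(b)$ part cancels because $s(i) = s'(i) = b$, and each $f_{i, J'}(b, \cdot)$ with $J' \neq J$ cancels because $s$ and $s'$ agree on $J'$; only the $J$-term survives, yielding $\mu_i^s - \mu_i^{s'} = f_{i,J}(b, u_1) - f_{i,J}(b, u_2)$.

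Finally, for arbitrary $s$ with $s(i) = b$ and any $s'' \in \sK$ with $s''(i) = b$, I would argue that the dichotomy $(i, b, J, s(J), s''(J))$ is valid for every $J \in I_i$: witnessing states can be constructed by zeroing all coordinates outside $\{i\} \cup J$ in $s$ and $s''$ respectively, which preserves the independent-set constraint (unfixed vertices impose none) while realizing the required configuration on $\{i\} \cup J$. Subtracting the decompositions of $\mu_i^s$ and $\mu_i^{s''}$ block by block then yields
\[
\mu_i^s - \mu_i^{s''} \;=\; \sum_{J \in I_i} \bigl[f_{i,J}(b, s(J)) - f_{i,J}(b, s''(J))\bigr] \;=\; \sum_{J \in I_i} X^{i, s(J), s''(J)}_{b, J},
\]
which is \eqref{eq:vertex-loss-comp-m-general}. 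The only conceptual step beyond the $m=2$ proof is the additivity of the decomposition over the blocks of $I_i$; once the claim that no correlation set containing $i$ straddles two blocks is in hand, the rest of the argument is a routine generalization of the bookkeeping carried out for Theorem~\ref{thm:phi-cover-m-equal-2}.
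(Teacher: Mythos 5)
Your proposal is correct and follows essentially the same route as the paper's proof: both decompose $\mu_i^s$ additively over the blocks $J \in I_i$ (your $f_{i,J}(b,\cdot)$ plays the role of the paper's parameters $\gamma^{b,\cdot}_{i,J}$), identify each block difference with $X^{i,\cdot,\cdot}_{b,J}$ via an $(i,b,J,u_1,u_2)$-pair in the cover, and telescope over $J \in I_i$. You make explicit a few steps the paper leaves implicit---that no correlation set containing $i$ straddles two blocks of $I_i$, that $X$ is independent of the witnessing pair, and that the needed dichotomies are realizable---but the underlying argument is the same.
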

\begin{proof}
Let $s,s' \in \sK$ be an $(i,b,J, u_1, u_2)$ pair. When we move from state $s$ to $s'$, the only difference between the expected losses incurred by vertex $v_i$ comes from the configuration of the vertices in $J$. Hence there at at most $2^{|J|+1}$ distinct parameters governing the expected loss incurred by vertex $i$ in a given state $s$ due to the configuration of the vertices in $J$. Denoting these parameters by $\gamma^{b,s(J)}_{i,J}$ we have
\begin{align*}
    \mu^{s'}_i - \mu^s_i = \gamma^{b,s'(J)}_{i,J} - \gamma^{b,s(J)}_{i,J} \coloneqq  X^{i,s'(J), s(J)}_{b,J}.
\end{align*}
Given the loss values for the states in the cover $\sK$, we can estimate the quantities $X^{i,s(J), s''(J)}_{b,J}$.

Next, for an arbitrary state $s$ such that $s(i)=b$, let $s'' \in \sK$ be such that $s''(i)=b$. We have
\begin{align*}
    \mu^s_i &= \mu^{s''}_i + \sum_{J \in I_i} \gamma^{b,s(J)}_{i,J} - \gamma^{b,s''(J)}_{i,J}\\
    &= \sum_{J \in I_i} X^{i, s(J), s''(J)}_{b,J}.
\end{align*}
%
\end{proof}
For general correlation sets with each vertex participating in at most
$n$ sets, we use \eqref{eq:vertex-loss-comp-m-general} instead of
\eqref{eq:vertex-loss-comp-m-equal-2-main-app} to estimate losses in step 4 of
the algorithm in Figure~\ref{ALG:stochastic-mdp-app}. The algorithm for general $m$ 
is described in Figure~\ref{ALG:stochastic-mdp-general} and has the following associated regret guarantee. The proof is identical to the proof of Theorem~\ref{thm:regred-mdp-m-equal-2}.
\begin{figure}[t]
\begin{center}
\fbox{\parbox{1\textwidth}{
{\bf Input:} The graph $\sG$, correlation sets $\calC$, fixing costs $c_i$.
\begin{enumerate}   
\item Pick a cover $\sK =  \set{s_1, s_2, \dots, s_r}$ of $\cC$. 

\item Let $N=10 \frac{T^{2/3}( \log rkT)^{1/3}}{r^{2/3}}$.

\item For each state $s \in \sK$ do:

\begin{itemize}

\item Move from current state to $s$ in at most $k$ time steps.

\item Play action $a=0$ in state $s$ for the next $N$ time steps to
  obtain an estimate $\h \mu_i^s$ for all $i \in [k]$.

\end{itemize}

\item Using the estimated losses for the states in $\sK$ and
  Equation~\eqref{eq:vertex-loss-comp-m-general}, 
  run the oracle for the optimization \eqref{eq:opt-state} to obtain an approximately optimal state $\hat{s}$.
\item Move from current state to $\hat{s}$ and play action $a=0$ from $\hat{s}$ for the remaining time steps.
\end{enumerate}
}}
\end{center}
\caption{Online algorithm for general $m$ achieving $\tilde{O}(T^{2/3})$ pseudo-regret.}
\label{ALG:stochastic-mdp-general} 
\end{figure}
\begin{reptheorem}{thm:regred-mdp-m-general-m}
Consider an $\MDP$ with losses bounded in $[0, \B]$ and maximum cost of fixing a vertex being $\cmax$. Given correlations sets $\calC$ of size at most $m$, and a cover $\sK$ of $\calC$ of size $r \leq n 2^{mn}$, the algorithm in Figure~\ref{ALG:stochastic-mdp-general} achieves a pseudo-regret bounded by $O(k r^{1/3} (\cmax +\B) (\log rkT)^{1/3} T^{2/3})$. Furthermore, given access to the optimization oracle for \eqref{eq:opt-state} the algorithm runs in time polynomial in $k$, $n = |\calC|$ and $r = |\sK|$.
\end{reptheorem}

\section{Beyond $T^{\frac{2}{3}}$ regret}
\label{sec:ucb-app}

In this section, we present algorithms for our 
problem that achieve $\tilde{O}(\sqrt{T})$ regret,
first in the case $m = 1$, next for any $m$, under the natural assumption that each criterion does not
participate in too many correlations sets.

Let us first point out that our problem can be
cast as an instance of the stochastic multi-armed 
bandit problem with switching costs, where each
state $s$ is viewed as an arm and where the cost
of transitions from state $s$ to state $s'$ is the 
switching cost between $s$ and $s'$. For the instance of this problem with identical switching costs, \citet{cesa2013online}[Appendix A] gave an algorithm achieving expected regret $\tilde{O}(\sqrt{T})$, via an arm-elimination technique with at most
$O(\log \log T)$ switches. However, naturally, the regret guarantee and the time complexity of that algorithm depend on the number of arms, which in our case is exponential ($2^k$). We will show here that, in most realistic instances of our model, we can achieve $\tilde{O}(\sqrt{T})$ regret efficiently.

\ignore{
if exponential time (in the size of the graph) is allowed then we can achieve an expected regret of $\tilde{O}(2^k \sqrt{T})$ by casting our problem as that of learning in a multi-armed bandit setting with $2^k$ arms and i.i.d.\ stochastic losses. The fixing cost of the criteria then corresponds to the cost of switching between arms. \citet{cesa2013online}[Appendix A] provide an algorithm with an expected regret of $\tilde{O}(2^k \sqrt{T})$. However, the computational of exponential in $k$ is prohibitively large. 
}

We first consider the case where the correlations sets in $\cC$ are of size one ($m = 1$). In this case, the parameter vector ${\bm \theta}$ can be described using the following $2k$ parameters: for each $i \in [k]$, let $\gamma^0_i$ denote the expected loss incurred by criterion $i$ when it is unfixed and $\gamma^1_i$ its expected loss when it is fixed. In this case, the cover $\sK$ is of size $k + 1$ and includes the all-zero state, as well as $k$ states corresponding to the indicator vectors of the $k$ vertices. Our algorithm is similar to the UCB algorithm for multi-armed bandits \cite{auer2002finite} and maintains optimistic estimates of the parameters. For every vertex $i$, we denote by $\tau^0_{i, t}$ the total number of time steps up to $t$ (including $t$) during which the vertex $v_i$ is in an unfixed position and by $\tau^1_{i, t}$ the total number of times steps up to $t$ during which vertex $v_i$ is in a fixed position. Fix $\delta \in (0, 1)$ and let $\hat{\gamma}^b_{i, t}$ be the empirical expected loss observed when vertex $v_i$ is in state $b$, for $b \in \set{0, 1}$. Our algorithm maintains the following optimistic estimates at each time step $t$,
\begin{align}
    \label{eq:optimistic-estimates-main-app}
    \tilde{\gamma}^b_{i,t} = \hat{\gamma}^b_{i,t} - 10B\sqrt{\frac{\log (kT/\delta)}{\tau^b_{i,t}}}.
\end{align}

To minimize the fixing cost incurred when transitioning from one state to another, our algorithm works in episodes. In each episode $h$, the algorithm first uses the current optimistic estimates to query the optimization oracle and determine the current best state $s$. Next, it remains at state $s$ for $t(h)$ time steps before querying the oracle again. The number of time steps $t(h)$ will be chosen carefully to avoid incurring the fixing costs too often. 
The algorithm is described in Figure~\ref{ALG:stochastic-mdp-m=1-ucb-main-app} (same as Figure~\ref{ALG:stochastic-mdp-m=1-ucb-main} in main body ).
We will prove that it benefits from the following 
regret guarantee.

\begin{figure}[t]
\begin{center}
\fbox{\parbox{1\textwidth}{
{\bf Input:} graph $\sG$, correlation sets $\calC$, fixing costs $c_i$.
\begin{enumerate}   

\item Let $\sK$ be the cover of size $k + 1$ that includes the all zeros state and the states corresponding to indicator vectors of the $k$ vertices. 

\item Move to each state in the cover once and update the optimistic estimates according to \eqref{eq:optimistic-estimates-main-app}.

\item For episodes $h = 1,2, \dots$ do:

\begin{itemize}

\item Run the optimization oracle \eqref{eq:opt-state} with the optimistic estimates as in \eqref{eq:optimistic-estimates-main-app} to get a state $s$.

\item Move from current state to state $s$. Stay in state $s$ for $t(h)$ time steps and update the corresponding estimates using \eqref{eq:optimistic-estimates-main-app}. Here $t(h) = \min_i \tau^{s(i)}_{i,t_h}$ and $t_h$ is the total number of time steps before episode $h$ starts.
\end{itemize}

\end{enumerate}
}}
\end{center}
\caption{Online algorithm for $m = 1$ with $\tilde{O}(\sqrt{T})$ regret.}
\label{ALG:stochastic-mdp-m=1-ucb-main-app} 
\end{figure}

\begin{reptheorem}{thm:regred-mdp-m-equal-1-ucb}
Consider an $\MDP$ with losses bounded in $[0, B]$ and maximum cost of
fixing a vertex being $c$. Given correlations sets $\cC$ of
size one, the algorithm of Figure~\ref{ALG:stochastic-mdp-m=1-ucb-main-app} (same as Figure~\ref{ALG:stochastic-mdp-m=1-ucb-main}) achieves a
pseudo-regret bounded by
$O(k^2 (c + B)^{2} \sqrt{T} \log T)$. Furthermore,
given access to the optimization oracle for \eqref{eq:opt-state}, the
algorithm runs in time polynomial in $k$.
\end{reptheorem}
\begin{proof}
We first bound the total number of different states visited by the algorithm. Initially the algorithm visits $k+1$ states in the cover. After that, each time the optimization oracle returns a new state $s$, by the definition of $t(h)$, the number of time steps where some vertex is in a $0$ or $1$ position is doubled. Hence, at most $O(k \log T)$ calls are made to the optimization oracle. Noticing that one can move from one state to another in at most $k$ time steps, the total loss incurred during the switching of the states is bounded by $O(k^2 (c+B) \log T)$. 

For $\epsilon >0$ to be chosen later, we consider the episodes where the algorithm plays a state $s$ with expected loss at most $\epsilon$ more than that of the best state $s^*$. The total expected regret accumulated in these {\em good} episodes is at most $\epsilon T$. We next bound the expected regret accumulated during the bad episodes. 

From Hoeffding's inequality we have that for any time $t$, with probability at least $1-\frac{\delta}{T^3}$, for all $i \in [k],b \in \{0,1\}$,
\begin{align}
    \label{eq:concentration-of-estimates}
    \tilde{\gamma}^b_{i,t} + 20B\sqrt{\frac{\log (kT/\delta)}{\tau^b_{i,t}}} \geq \gamma^b_i \geq \tilde{\gamma}^b_{i,t}.
\end{align}
Let $G$ be the good event that \eqref{eq:concentration-of-estimates} holds for all $t \in [1,T]$. Conditioned on $G$ we also have that for any state $s$ and vertex $i$
\begin{align}
\label{eq:bound-on-optimistics-loss-per-state}
\mu^s_i \geq \tilde{\mu}^{s}_i, 
\end{align}
where $\tilde{\mu}^{s}_i$ is the estimated loss using the optimistic estimates. We will bound the expected regret accumulated in the bad episodes conditioned on the event $G$ above.

In order to do this we define certain key quantities. Consider a particular trajectory $\mathcal{T}$ of $T$ time steps executed by the algorithm. Furthermore, let $\mathcal{T}$ be such that the good event in \eqref{eq:concentration-of-estimates} holds during the $T$ time steps. We associate the following random variables with the trajectory. Let $N_\epsilon$ be the total number of time steps spent in bad episodes. Furthermore, let $\Reg_\epsilon$ be the total accumulated regret during these time steps. Then it is easy to see that $\E[\Reg_\epsilon | G] > \epsilon N_\epsilon$. For each vertex $v_i$ and $b \in \{0,1\}$ we define $\tau_\epsilon(i,b)$ to be the total number of time steps that vertex $v_i$ spends in bad episodes in position $b$ and $\tau_\epsilon(i,b,t)$ to be the total number of time steps spent in bad episodes up to time step $t$. Notice that
\begin{align}
\label{eq:sum-of-n-eps}
    \sum_b \sum_i \tau_\epsilon(i,b) \leq 2k N_\epsilon.
\end{align}

 Consider a particular bad episode $h$ and let $s$ be the state returned by the optimization oracle during that episode. Then conditioned on the good event $G$, the total regret $\Reg_{h}$ accumulated during episode $h$ satisfies
 \begin{align*}
    \E[{\Reg}_{h}|\mathcal{T}] &= \sum_i \big(\mu^s_i - \mu^{s^*}_i \big) t(h)\\
    &\leq \sum_i \big(\mu^s_i - \tilde{{\mu}}^{s^*}_i \big) t(h) & \big(\text{from} \eqref{eq:bound-on-optimistics-loss-per-state}\big)\\
    &\leq \sum_i \big(\mu^s_i - \tilde{{\mu}}^{s}_i \big) t(h) & \big(\text{since $s$ is best state according to the optimistic losses} \big)\\
    &\leq \sum_{i} \big(\gamma^{s(i)}_{i} - \tilde{\gamma}^{s(i)}_{i,t_h} \big) t(h) \\
    &\leq \sum_i 20B \sqrt{\frac{\log (kT/\delta)}{\tau^b_{i,t_h}}} t(h). & \big(\text{from \eqref{eq:optimistic-estimates}} \big)
\end{align*}
In the above inequality, the expectation is taken over the loss distribution for each vertex during states visited in the trajectory $\mathcal{T}$.

Since $\tau^b_{i,t_h} \geq \tau_\epsilon(i,b,t_h)$ we have
we have that
\begin{align*}
    \E[{\Reg}_{h} | \mathcal{T}] &\leq \sum_i 20B \sqrt{\frac{\log (kT/\delta)}{\tau_\epsilon(i,b,t_h)}} t(h).
\end{align*}
Summing over bad episodes, the total expected regret in bad episodes can be bounded by
\begin{align}
\label{eq:reg-intermediate}
    \E[{\Reg}_{\epsilon} | \mathcal{T}] &\leq \sum_i \sum_b \sum_{h: h \text{ is bad }} 20B \sqrt{\frac{\log (kT/\delta)}{\tau_\epsilon(i,b,t_h)}} t(h).
\end{align}
Notice that $\tau_\epsilon(i,b,t_h) = \sum_{h' < h: h' \text{ is bad}} t(h')$. Furthermore, we know that (\cite{jaksch2010near}) for any sequence $z_1, z_2, \dots, z_h$ of non-negative numbers such that $z_i \geq 1$,
\begin{align}
\label{eq:bounded-sequence}
   \sum_{i=1}^h \frac{z_i}{\sqrt{\sum_{j=1}^{i-1} z_j}} &\leq (1+\sqrt{2})\sqrt{\sum_{i=1}^h z_i}.
\end{align}
From \eqref{eq:bounded-sequence} we get:
\begin{align*}
    \sum_{h: h \text{ is bad }}  \frac{t(h)}{\sqrt{\tau_\epsilon(i,b,t_h)}} &\leq \sqrt{\tau_\epsilon(i,b)}. 
\end{align*}
Substituting into \eqref{eq:reg-intermediate} we get that
\begin{align*}
    \E[{\Reg}_{\epsilon} | \mathcal{T}] &\leq \sum_i \sum_b 20B \sqrt{{\log (kT/\delta)}} \sqrt{\tau_\epsilon(i,b)}.
\end{align*}
Using \eqref{eq:sum-of-n-eps} we have that the above expected regret is maximized when $\tau_\epsilon(i,b)$ are equal, thereby implying 
\begin{align*}
    \E[{\Reg}_{\epsilon}|\mathcal{T}] &\leq 20Bk \sqrt{{\log (kT/\delta)}} \sqrt{N_\epsilon}.
\end{align*}
Using the fact that $\E[\Reg_\epsilon | G] > \epsilon N_\epsilon$ we get that conditioned on $G$,
\begin{align*}
    N_\epsilon \leq \frac{400 B^2 k^2 \log(kT/\delta)}{\epsilon^2}.
\end{align*}
Combining trajectories $\mathcal{T}$ where the good event $G$ holds, we get that the total expected regret accumulated in the bad episodes satisfies
\begin{align*}
    \E[{\Reg}_{\epsilon}|G] &\leq 20Bk \sqrt{{\log (kT/\delta)}} \sqrt{N_\epsilon}\\
    &\leq 400B^2 k^2 \frac{\log(kT/\delta)}{\epsilon}.
\end{align*}
Combining the above with the total expected regret accumulated in the good episodes, the loss of moving to different states, and the probability of good event $G$ not holding, we get
\begin{align*}
    {\Reg}(\cA) &\leq 400B^2 k^2 \frac{\log(kT/\delta)}{\epsilon} + \epsilon T + \frac{k(c+B)}{T^3} + O(k^2 (c+B) \log T).
\end{align*}
Setting $\epsilon = \frac{1}{\sqrt{T}}$  and $\delta = \frac{1}{T^4}$, we have the final bound
\begin{align*}
    {\Reg}(\cA) &\leq O\big((c+B)^2 k^2  \sqrt{T} \log(T)\big).
\end{align*}
\end{proof}

The algorithm of Figure~\ref{ALG:stochastic-mdp-m=1-ucb-main-app} can be extended to higher $m$ values, assuming that each vertex does not participate in too many correlation sets. \ignore{We say that vertices $v_i$ and $v_j$ are correlated if they appear together in some correlation set in $\cC$.}If a vertex $v_i$ appears in at most $O(\log k)$ correlation sets, then the total loss incurred by vertex $v_i$ in any state depends on the position of $v_i$ and every other vertex that it is correlated with. Hence the total loss incurred by vertex $v_i$ depends on an $O(m \log k)$-dimensional vector. For every configuration ${\bm b}$ of this vector, we associate with each vertex $v_i$, parameters $\gamma^{\bm b}_i$. Notice that there are at most $O(k^m)$ such parameters. Each parameter is in turn a sum of a subset of the parameters in ${\bm \theta}$. Notice that in this case the size of the cover $\sK$ is upper bounded by $O(k^{m+1})$. Our algorithm for higher $m$ values is similar to the one for $m = 1$, but instead maintains optimistic estimates of the parameters $\gamma^{\bm b}_i$ via
\begin{align}
    \label{eq:optimistic-estimates-higher-m}
    \tilde{\gamma}^{\bm b}_{i,t} = \hat{\gamma}^{\bm b}_{i,t} - 10B\sqrt{m\frac{\log (kT/\delta)}{\tau^{\bm b}_{i,t}}}.
\end{align}
Here $\tau^{\bm b}_{i,t}$ is the total time spent up to and including $t$ where the vertex $i$ and the vertices that it is correlated with are in configuration ${\bm b}$. Similarly, for a given state $s$, we will denote by ${\bm s}(i)$, the configuration of the vertex $i$ and the vertices that it is correlated with. The algorithm is sketched below
\begin{figure}[h]
\begin{center}
\fbox{\parbox{1\textwidth}{
{\bf Input:} The graph $\sG$, correlation sets $\calC$, fixing costs $c_i$.
\begin{enumerate}   
\item Let $\sK$ be the cover of size $O(k^{m+1})$. 

\item Move to each state in the cover once and update the optimistic estimates according to \eqref{eq:optimistic-estimates-higher-m}.

\item For episodes $h = 1,2, \dots$ do:

\begin{itemize}

\item Run the optimization oracle \eqref{eq:opt-state} with the optimistic estimates as in \eqref{eq:optimistic-estimates-higher-m} to get a state $s$.

\item Move from current state to state $s$. Stay in state $s$ for $t(h)$ time steps and update the corresponding estimates using \eqref{eq:optimistic-estimates-higher-m}. Here $t(h) = \min_i \tau^{{\bm s}(i)}_{i,t_h}$ and $t_h$ is the total number of time steps before episode $h$ starts.
\end{itemize}

\end{enumerate}
}}
\end{center}
\caption{Online algorithm for higher $m$.}
\label{ALG:stochastic-mdp-m=higher--ucb} 
\end{figure}

For $m \geq 1$, we obtain the following guarantee.
\begin{theorem}
\label{thm:regred-mdp-m-higher-ucb}
Consider an $\MDP$ with losses bounded in $[0,B]$ and maximum cost of
fixing a vertex being $c$. Given correlations sets $\cC$ of size at most $m$ such that each vertex participates in at most $O(\log k)$ sets, the the algorithm in Figure~\ref{ALG:stochastic-mdp-m=higher--ucb} achieves a
pseudo-regret bounded by
$O(m k^{2m+2} (c+B)^{2} \sqrt{T} \log T)$. Furthermore,
given access to the optimization oracle for \eqref{eq:opt-state}, the
algorithm runs in time polynomial in $O(k^{m + 1})$.
\end{theorem}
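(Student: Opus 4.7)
The proof extends that of Theorem~\ref{thm:regred-mdp-m-equal-1-ucb} to the enlarged parameter space. Under the assumption that each vertex participates in $O(\log k)$ correlation sets of size at most $m$, each vertex $v_i$ has at most $O(m\log k)$ correlated neighbors, so its expected per-step loss in any state depends on at most $2^{O(m\log k)} = k^{O(m)}$ distinct configurations ${\bm b}$ of those neighbors together with the fix-bit of $i$. The crucial structural observation is that the total expected loss at state $s$ decomposes additively as $g(s) = \sum_{i=1}^k \gamma^{{\bm s}(i)}_i$, where ${\bm s}(i)$ denotes the restriction of $s$ to vertex $i$ and its correlated neighbors. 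Consequently the optimistic plug-in $\tilde g(s) = \sum_i \tilde\gamma^{{\bm s}(i)}_{i,t}$ is a valid lower bound on $g(s)$ on the usual good event, and the oracle-selected state satisfies $\tilde g(s_h) \leq \tilde g(s^*) \leq g(s^*)$.

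I would then mimic the analysis of Theorem~\ref{thm:regred-mdp-m-equal-1-ucb} in four steps. First, a Hoeffding bound together with a union bound over the $O(k^{m+1})$ parameters $\gamma^{\bm b}_i$ and $T$ time steps establishes a good event $G$ on which $|\tilde\gamma^{\bm b}_{i,t} - \gamma^{\bm b}_i| = O(B\sqrt{m\log(kT/\delta)/\tau^{\bm b}_{i,t}})$; the extra $\sqrt{m}$ factor in \eqref{eq:optimistic-estimates-higher-m} absorbs the logarithm of the parameter count. Second, bound the number of episodes: each episode ends when some counter $\tau^{{\bm s}(i)}_{i,\cdot}$ at least doubles, so there are at most $O(k^{m+1}\log T)$ episodes, contributing a total switching cost of $O(k^{m+2}(c+B)\log T)$.

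Third, partition episodes into good (per-step regret $\leq \epsilon$) and bad, and write $N_\epsilon$ for the total bad-episode length. For a bad episode $h$ with selected state $s$, optimism yields
\begin{align*}
\mathbb{E}[\mathrm{Reg}_h \mid G] \leq \sum_{i=1}^k \bigl(\gamma^{{\bm s}(i)}_i - \tilde\gamma^{{\bm s}(i)}_{i,t_h}\bigr)\, t(h) \leq 20B\sqrt{m\log(kT/\delta)} \sum_{i=1}^k \frac{t(h)}{\sqrt{\tau^{{\bm s}(i)}_{i,t_h}}}.
\end{align*}
Summing over bad episodes and applying the telescoping inequality \eqref{eq:bounded-sequence} to each of the $O(k^{m+1})$ counter slots $(i,{\bm b})$ gives a bad-episode regret bound of $O(Bk^{m+1}\sqrt{m\log(kT/\delta)}\sqrt{N_\epsilon})$. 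Combining with $\mathbb{E}[\mathrm{Reg}_\epsilon \mid G] \geq \epsilon N_\epsilon$ yields $N_\epsilon = O(B^2 k^{2m+2} m\log(kT/\delta)/\epsilon^2)$ and hence bad-episode regret $O(B^2 k^{2m+2} m\log(kT/\delta)/\epsilon)$. Picking $\epsilon = 1/\sqrt{T}$ and $\delta = 1/T^4$, then adding the $\epsilon T$ good-episode contribution, the $O(k^{m+2}(c+B)\log T)$ switching cost, and the negligible $\mathrm{Pr}(\bar G)T(c+B)$ term, produces the claimed bound $O(mk^{2m+2}(c+B)^2\sqrt{T}\log T)$. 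The runtime is polynomial in $k^{m+1}$ because the cover has size $O(k^{m+1})$, the algorithm tracks $O(k^{m+1})$ counters, and each episode makes one oracle call.

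The main obstacle is the additive per-vertex decomposition $g(s) = \sum_i \gamma^{{\bm s}(i)}_i$ itself: one must verify that, under the correlation-set loss model \eqref{eq:loss-definition-via-correlation} and the $O(\log k)$-participation assumption, all contributions to vertex $i$'s expected loss can be grouped into a single parameter $\gamma^{{\bm s}(i)}_i$ indexed by the joint configuration of $i$'s correlated neighborhood, and that counter updates respect this grouping so that the Jaksch-style telescoping applies. Once this decomposition is established, the rest of the argument is a direct transcription of the $m=1$ proof with $2k$ replaced by $O(k^{m+1})$ parameters, which is precisely what inflates the $k^2$ in Theorem~\ref{thm:regred-mdp-m-equal-1-ucb} to $k^{2m+2}$ here.
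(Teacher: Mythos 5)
Your proposal is correct and follows essentially the same route as the paper's proof: a union bound over the $k^{O(m)}$ configuration parameters $\gamma^{\bm b}_i$ to establish the good event, a doubling argument bounding the number of episodes and hence the switching cost by $\mathrm{poly}(k^m)\log T$, and the good/bad episode split with the Jaksch-style telescoping inequality applied per counter slot $(i,{\bm b})$, yielding $N_\epsilon = O(B^2 k^{2m+2} m \log(kT/\delta)/\epsilon^2)$ and the claimed bound with $\epsilon = 1/\sqrt{T}$, $\delta = 1/T^4$. The additive decomposition $g(s) = \sum_i \gamma^{{\bm s}(i)}_i$ that you flag as the main obstacle is exactly the premise the paper sets up just before the algorithm (each $\gamma^{\bm b}_i$ being a sum of the relevant entries of $\bm\theta$ over the correlation sets containing $i$), so your argument matches the paper's in all essentials.
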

\begin{proof}
The proof is very similar to the proof of Theorem~\ref{thm:regred-mdp-m-equal-1-ucb}. Since each time the optimization oracle is called the time spent in some configuration ${\bm s}(i)$ is doubled, we get that the total number of calls to the optimization oracle are bounded by $O(k^m \log T)$. Hence the total loss incurred during the exploration phase can be bounded by $O(k^m (c+B) \log T)$. Let $G$ be the good event that \eqref{eq:optimistic-estimates-higher-m} holds for all $t \in [1,T]$.

As before, the loss incurred during good episodes is bounded by $\epsilon T$. Define $\tau_\epsilon(i, {\bm b})$ to be the total time that vertex $i$ and vertices that it is correlated with spend in configuration ${\bm b}$ during bad episodes. Then analogous to \eqref{eq:sum-of-n-eps} we have
\begin{align*}
    \sum_{{\bm b}} \sum_i \tau_\epsilon(i, {\bm b}) &\leq O(k^m) N_\epsilon.
\end{align*}
For a trajectory $\mathcal{T}$ where the good event $G$ holds,
the total expected regret in bad episodes can be bounded as
\begin{align}
\label{eq:reg-intermediate-general}
    \E[{\Reg}_{\epsilon}|\mathcal{T}] &\leq \sum_i \sum_b \sum_{h: h \text{ is bad }} 20B \sqrt{m \frac{\log (kT/\delta)}{\tau_\epsilon(i,{\bm b},t_h)}} t(h)\\
    &\leq \sum_i \sum_{{\bm b}} 20B \sqrt{m \log(kT/\delta)} \sqrt{\tau_\epsilon(i, {\bm b})}\\
    &\leq O(B k^{m+1})\sqrt{m \log(kT/\delta)} \sqrt{N_\epsilon}.
\end{align}
Using the fact that $\E[\Reg_\epsilon | \mathcal{T}] > \epsilon N_\epsilon$ we get that for a trajectory where the event $G$ holds,
\begin{align*}
    N_\epsilon \leq \frac{O(R^2 k^{2m+2} m \log(kT/\delta))}{\epsilon^2}.
\end{align*}
Hence we get that conditioned on the good event $G$, the total expected regret accumulated in the bad episodes is at most
\begin{align*}
    \E[{\Reg}_{\epsilon}|G] &\leq  O \big(R^2 m k^{2m+2} \frac{\log(kT/\delta)}{\epsilon} \big).
\end{align*}

Combining the above with the total expected regret accumulated in the good episodes, the loss of moving to different states, and the probability of the event $G$ not holding we get
\begin{align*}
    {\Reg}(\cA) &\leq O\big(B^2 m k^{2m+2} \frac{\log(kT/\delta)}{\epsilon}\big) + \epsilon T + \frac{k(c+B)}{T^3} + O(k^m \log T).
\end{align*}
Setting $\epsilon = \frac{1}{\sqrt{T}}$  and $\delta = \frac{1}{T^4}$, we have the final bound
\begin{align*}
    {\Reg}(\cA) &\leq O\big((c+B)^2 m  k^{2m+2}  \sqrt{T} \log(T)\big).
\end{align*}
\end{proof}
An important corollary of the above is the following
\begin{corollary}
\label{cor:degree-d-graph}
If $\sG$ is a constant degree graph with correlation sets consisting of subsets of edges in $\sG$, then there is a polynomial time algorithm that achieves a pseudo-regret bounded by $O(k^6 (c + B)^{2} \sqrt{T} \log T)$.
\end{corollary}

\section{Adversarial Setting}
\label{sec:adversarial-app}

In this section we provide the proof of Theorem~\ref{thm:adversarial-online} restated below.
\begin{reptheorem}{thm:adversarial-online}
  Let $\sG$ be a graph with fixing costs at least one. Then, the
  algorithm of Figure~\ref{ALG:adversarial} achieves a competitive
  ratio of at most $2\B + 4$ on any sequence of complaints with loss values
  in $[0, \B]$.
\end{reptheorem}      
\begin{proof}
Recall that $\ell_{i(t)}$ denotes the loss incurred by vertex $v_i$ at time $t$. We divide this loss into the amount that was used to reduce the $\kappa_j$ value of one its neighbors and the rest. Formally, for every edge $(i,j)$ we define $\delta^t_{i \to j}$ as follows. If in time step $t$, the complaint arrived for vertex $i$ and step 2(b) was executed to reduce $\kappa_j$ by $\Delta$, then we define $\delta^t_{i \to j} = \Delta$. Otherwise we define $\delta^t_{i \to j}$ to be zero. We also define
\begin{align}
\label{eq:def-ii}
    \delta^t_{i \to i} = \ell_{i(t)} - \sum_{j \in N(i)} \delta^t_{i \to j}.
\end{align}
If vertex $v_i$ is fixed $f_i$ times during the course of the algorithm then we have that the total loss incurred by the algorithm can be written as
\begin{align}
\label{eq:def-lossA}
    \text{Loss}(\cA) &= \sum_{i=1}^k  f_i c_i + \sum_{i=1}^k \sum_{t=1}^T \big( \delta^t_{i \to i} + \sum_{j \in N(i)} \delta^t_{i \to j} \big).
\end{align}
Next we notice that each time a vertex $v_i$ is fixed it accumulates a value of $\kappa_i = c_i$. Furthermore, the total loss incurred by vertices as a result of executing step 2(b) is upper bounded by the total $\kappa$ value accumulated. Hence we have
\begin{align}
    \label{eq:kappa-bound}
    \sum_{t=1}^T \sum_{i=1}^k \sum_{j \in N(i)} \delta^t_{i \to j} &\leq \sum_{i=1}^k f_i c_i.
\end{align}
Substituting into \eqref{eq:def-lossA} we have
\begin{align}
\label{eq:def-lossB}
    \text{Loss}(\cA) &\leq  \sum_{i=1}^k  2f_i c_i + \sum_{i=1}^k \sum_{t=1}^T \delta^t_{i \to i}.
\end{align}
Next we bound the above loss for each vertex separately. For a given vertex $v_i$ that is fixed $f_i$ times by the algorithm, we can divide the time steps into $f_i+1$ intervals consisting of an interval $I_0$ starting from $t=0$ upto (and including) the first time $v_i$ is fixed. The next $f_i$ intervals correspond to the time spent by $v_i$ between two successive fixes. Denoting these intervals as $I_0, I_1, \dots$ we have that
\begin{align}
    \label{eq:vertex-loss}
    2f_i c_i + \sum_{i=1}^k \sum_{t=1}^T \delta^t_{i \to i} &= \sum_{t \in I_0} \delta^t_{i \to i} + \sum_{t \in I_r} (2c_i + \delta^t_{i \to i}).
\end{align}
Next we compare the above to the loss incurred by {\OPT} for vertex $v_i$. Let $\ell^*_{i(t)}$ be the loss incurred by {\OPT} for vertex $v_i$ at time $t$. We will denote by $s^*_t$ the state of the vertices at time $t$ according to {\OPT}. 

We instead redefine the loss incurred by {\OPT} for vertex $v_i$ at time $t$ to be
\begin{align}
    \label{eq:def-loss-opt}
    \tilde{\ell}_{i(t)} = \ell^*_{i(t)} + \sum_{j \in N(i)} \delta^t_{j \to i} \1(s^*_t(j)=0).
\end{align}
Notice that 
$$
\sum_{i \in N(j)} \delta^t_{j \to i} \1(s^*_t(j)=0) \leq \ell^*_{j(t)}.
$$
Hence we get that
\begin{align}
\sum_{i=1}^k \sum_{t=1}^T \tilde{\ell}_{i(t)} &\leq \sum_{i=1}^k \big( \sum_{t=1}^T \ell^*_{i(t)} + \sum_{j \in N(i)} \ell^*_{j(t)}\big)\\
&\leq 2 \cdot \text{Loss}(\text{\OPT}).
\end{align}
Next we consider each interval in \eqref{eq:def-lossB} separately. For any interval $I_r$ we have that 
\begin{align}
    \label{eq:bound-on-ii}
    \sum_{t \in I_r} \delta^t_{i \to i} &\leq B c_i.
\end{align}
This is because after incurring a loss of more than $c_i$, any additional loss incurred by $v_i$ is due to step 2(b), since otherwise step 2(c) will be executed and $v_i$ will be fixed.

Next consider interval $I_0$. The loss incurred by the algorithm on vertex $v_i$ equals $\sum_{t \in I_0} \delta^t_{i \to i} \leq B c_i$. Either {\OPT} fixes $v_i$ at least once during this interval or incurs the total loss. Either way we have that the loss incurred by {\OPT} is at least 
\begin{align}
\label{eq:loss-1}    
\min \big(c_i, \sum_{t \in I_0} \delta^t_{i \to i} \big) &\geq \frac{\sum_{t \in I_0} \delta^t_{i \to i}}{B}.
\end{align}

Next consider an interval $I_r$ between two successive fixes. The loss incurred by the algorithm for vertex $v_i$ during this interval is at most 
$$
\sum_{t \in I_r} \delta^t_{i \to i} + 2c_i \leq (B+2)c_i.
$$
If {\OPT} fixes $v_i$ at least once during this interval then it incurs a cost of $c_i$. If $v_i$ remains unfixed in {\OPT} during the course of the interval then {\OPT} incurs a loss of at least $c_i$. This is because vertex $v_i$ went from being unfixed to fixed during the second half of the  interval and hence a total loss of at least $c_i$ must have arrived for the vertex $v_i$ during this interval. 

Finally, suppose vertex $v_i$ is fixed in {\OPT} before the start of the interval and remains so throughout. Since $v_i$ goes from being fixed to unfixed during the first half of the interval, we must have $\sum_{t \in I_r}\sum_{j \in N(i)} \delta^t_{j \to i} \geq c_i$. Furthermore, since $v_i$ is fixed by {\OPT} during this interval, {\OPT} must incur a loss on all neighbors of $j$. In particular, from \eqref{eq:def-loss-opt} we have
\begin{align}
    \sum_{t \in I_r} \tilde{\ell}_{i(t)} &\geq \sum_{t \in I_r}\sum_{j \in N(i)} \delta^t_{j \to i} \1(s^*_t(j)=0)\\
    &\geq c_i.
\end{align}

In either of the three cases we have that the loss $\sum_{t \in I_r} \tilde{\ell}_{i(t)}$ incurred by {\OPT} is at least a $1/(B+2)$ fraction of the loss incurred by the algorithm. Summing over all the vertices and the corresponding intervals, we get that the total loss incurred by the algorithm can be bounded by 
$$
\text{Loss}(\cA) \leq (B+2)\sum_{t=1}^T \sum_{i=1}^k \tilde{\ell}_{i(t)} \leq 2(B+2) \text{Loss}({\text{OPT}}).
$$
\end{proof}

\end{document}